\documentclass[11pt]{article}

\usepackage[margin=1in]{geometry}
\usepackage[T1]{fontenc}
\usepackage[utf8]{inputenc}
\usepackage{lmodern}
\usepackage{microtype}
\usepackage{amsmath,amssymb,amsthm,mathtools}
\usepackage{graphicx}
\usepackage{xcolor}
\usepackage{booktabs}
\usepackage{array}
\usepackage{enumitem}
\usepackage{algorithm}
\usepackage{algpseudocode}
\usepackage{hyperref}
\usepackage{tikz}
\usepackage{needspace}
\usetikzlibrary{arrows.meta,positioning,shapes.geometric,fit,backgrounds,calc}
\hypersetup{
  colorlinks=true,
  linkcolor=black,
  citecolor=black,
  urlcolor=blue,
  pdftitle={Probabilistic Language Tries: A Unified Framework for Compression, Decision Policies, and Execution Reuse},
  pdfauthor={Gregory Magarshak}
}

\title{Probabilistic Language Tries:\\
A Unified Framework for Compression,\\
Decision Policies, and Execution Reuse}

\author{Gregory Magarshak\\
\texttt{gmagarshak@faculty.ienyc.edu}}

\date{}

\newtheorem{definition}{Definition}
\newtheorem{theorem}{Theorem}
\newtheorem{proposition}{Proposition}
\newtheorem{corollary}{Corollary}
\newtheorem{lemma}{Lemma}
\newtheorem{remark}{Remark}

\begin{document}

\maketitle

\begin{abstract}
  We introduce \emph{probabilistic language tries} (PLTs), a unified representation
  that makes explicit the prefix structure implicitly defined by any generative model
  over sequences. By assigning to each outgoing edge the conditional probability of
  the corresponding token or action, a PLT simultaneously serves as (i) an optimal
  lossless compressor via frequency-weighted interval encoding---a generalization of
  arithmetic coding to model-conditioned distributions; (ii) a policy representation
  for sequential decision problems, including games, search, and robotic control;
  and (iii) a memoization index that lets repeated inference queries be answered
  by structured retrieval rather than full model execution.

  The central technical claim is a \emph{prior-guided caching theorem}: under a
  stationary generative distribution, a PLT-guided cache achieves lower expected
  inference cost than any empirical-frequency cache for all query counts below a
  threshold that grows with the strength of the prior.  Concretely, this converts
  an $O(n^2)$ transformer attention cost into an expected cost of
  $p_r \cdot O(\log N) + (1 - p_r) \cdot O(n^2)$, where $p_r$ is the
  prior-estimated reuse probability and $N$ is the artifact store size.

  We further introduce a \emph{hybrid compression architecture} that decomposes any
  dataset into a PLT-covered majority and a sparse residual store, achieving
  description lengths below the Shannon entropy of the empirical distribution
  whenever the generative model captures the true source structure.  This connects
  arithmetic coding with Kolmogorov-style program representations and with
  rate--distortion theory when approximate reconstruction is acceptable.

  We instantiate the framework across chess (MCTS-weighted opening tries),
  web search (workflow-weighted session tries), robotics, organizational workflows,
  and LLM inference systems, demonstrating that a single mathematical structure
  unifies compression, decision making, and computational reuse.
\end{abstract}

\tableofcontents
\newpage

\needspace{5\baselineskip}
\section{Introduction}

A generative model over sequences implicitly defines a probability distribution
over an enormous combinatorial space.  Modern large language models (LLMs) capture
this distribution in billions of parameters; game-playing agents capture it through
Monte Carlo Tree Search (MCTS) visit counts; search engines capture it through
click-through frequencies.  In each case, the structure of the distribution is
real and exploitable, yet it remains implicit and therefore difficult to use
directly for compression, caching, or explanation.

This paper proposes to make that structure \emph{explicit} through the
\emph{probabilistic language trie} (PLT): a rooted prefix tree in which each
outgoing edge carries the conditional probability of the corresponding symbol under
the underlying generative model.  Once made explicit, the same structure serves
three purposes simultaneously:

\begin{enumerate}[leftmargin=2em]
  \item \textbf{Compression.}  By assigning intervals proportional to conditional
        probabilities, the PLT induces a frequency-weighted interval code whose
        expected length equals the cross-entropy of the data under the model.
        Sequences well-predicted by the model receive short codes; surprising
        sequences receive long codes or are redirected to a sparse residual store.

  \item \textbf{Decision support.}  Any policy $\pi(s,a)$ over state--action pairs
        can be normalized into a conditional distribution and encoded as a PLT over
        action sequences.  The resulting structure simultaneously compresses
        experience, ranks actions, and organizes reusable strategic motifs.

  \item \textbf{Execution reuse.}  The PLT defines, without any empirical warmup,
        which inference queries are likely to recur.  By caching artifacts at
        high-probability nodes \emph{before} observing repeated requests, the system
        reduces expected inference cost from $O(n^2)$ to $O(\log N)$ for the
        majority of practical queries.
\end{enumerate}

\paragraph{Key distinction from prior work.}
Arithmetic coding~\cite{witten1987} and its neural
extensions~\cite{balle2018,townsend2019} already use learned distributions for
compression.  Semantic caches such as GPTCache~\cite{bang2023} and prefix
KV-caches~\cite{pope2023} reduce redundant computation empirically.
Our contribution is orthogonal to both: we show that the \emph{same} probabilistic
trie that defines the compression code also defines the optimal caching policy, and
that this policy strictly dominates empirical frequency caching during the initial
phase of a system's operation (Theorem~\ref{thm:prior_cache}).  The unification
of compression, policy representation, and caching under a single structure is
the main conceptual contribution.

\paragraph{Organization.}
Section~\ref{sec:plt} defines PLTs and the frequency-weighted interval encoding.
Section~\ref{sec:hybrid} introduces the hybrid architecture with a sparse residual
store and connects it to Shannon and Kolmogorov theories.
Section~\ref{sec:policy} extends the framework to policy-weighted decision languages.
Section~\ref{sec:execution} applies the framework to LLM execution and proves the
prior-guided caching theorem.
Section~\ref{sec:discussion} discusses implications and future directions.

\needspace{5\baselineskip}
\section{Probabilistic Language Tries and Interval Encoding}
\label{sec:plt}

\needspace{4\baselineskip}
\subsection{Definitions}

Let $V$ be a finite vocabulary (token set or action set) and let
$V^* = \bigcup_{n \geq 0} V^n$ denote the set of all finite sequences.
A \emph{generative model} $\mathcal{M}$ is a family of conditional distributions
$\{P_{\mathcal{M}}(\cdot \mid x) : x \in V^*\}$ with $\sum_{t \in V} P_{\mathcal{M}}(t \mid x) = 1$
for all $x$, together with a probability $P_{\mathcal{M}}(\$\mid x)$ of termination,
where $\$$ is a distinguished end-of-sequence symbol.

\begin{definition}[Probabilistic Language Trie]
\label{def:plt}
The \emph{probabilistic language trie} induced by $\mathcal{M}$ is the directed
rooted tree $\mathcal{T}(\mathcal{M})$ whose nodes are the prefixes in $V^*$ and
whose outgoing edges from node $x$ are labeled by tokens $t \in V \cup \{\$\}$
with weights $P_{\mathcal{M}}(t \mid x)$.  The weight function satisfies
\[
  \sum_{t \in V \cup \{\$\}} P_{\mathcal{M}}(t \mid x) = 1 \quad \forall x \in V^*.
\]
\end{definition}

The probability of a complete sequence $s = (t_1, \ldots, t_n)$ is the product of
edge weights along the path from the root:
\[
  P_{\mathcal{M}}(s) = P_{\mathcal{M}}(\$ \mid s) \prod_{i=1}^n P_{\mathcal{M}}(t_i \mid t_1,\ldots,t_{i-1}).
\]

\begin{remark}
  For a transformer language model, $P_{\mathcal{M}}(t \mid x)$ is the softmax output
  at position $|x|$ conditioned on the prefix $x$.  For an MCTS agent,
  $P_{\mathcal{M}}(a \mid s) = N(s,a) / \sum_{a'} N(s,a')$ where $N(s,a)$ is the
  visit count of action $a$ from state $s$.  The PLT formalism encompasses both.
\end{remark}

\needspace{4\baselineskip}
\subsection{Frequency-Weighted Interval Encoding}

We now construct a bijective map from $V^*$ into the unit interval $[0,1)$ whose
structure mirrors the PLT.  This generalizes standard arithmetic coding to
model-conditioned distributions.

\textbf{Base case.}  Assign the root node the interval $I_\varnothing = [0,1)$.

\textbf{Recursive step.}  Given that node $x = (t_1, \ldots, t_k)$ has interval
$I_x = [a_x, b_x)$ of width $|I_x| = b_x - a_x$, define a cumulative ordering of
tokens by any fixed bijection $\sigma: V \to \{1,\ldots,|V|\}$ and let
\[
  C_t(x) = \sum_{\sigma(u) < \sigma(t)} P_{\mathcal{M}}(u \mid x).
\]
Then the child interval for token $t$ is
\[
  I_{x \cdot t} = \bigl[a_x + |I_x| \cdot C_t(x),\;
                         a_x + |I_x| \cdot (C_t(x) + P_{\mathcal{M}}(t \mid x))\bigr),
\]
which satisfies $|I_{x \cdot t}| = |I_x| \cdot P_{\mathcal{M}}(t \mid x)$.

\begin{proposition}[Width identity]
\label{prop:width}
For any sequence $s = (t_1, \ldots, t_n)$,
\[
  |I_s| = \prod_{i=1}^n P_{\mathcal{M}}(t_i \mid t_1,\ldots,t_{i-1}).
\]
The full sequence probability including termination satisfies
$P_{\mathcal{M}}(s) = P_{\mathcal{M}}(\$ \mid s) \cdot |I_s|$.
Any real number $z \in I_s$ encodes $s$ in
$L(s) = \lceil -\log_2 |I_s| \rceil + 1$ bits.
\end{proposition}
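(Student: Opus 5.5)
The plan is to prove the three claims in order; the first drives the other two.

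\textbf{Width identity.} I will argue by induction on the length $k$ of the prefix $x=(t_1,\ldots,t_k)$. The claim is that $|I_x|=\prod_{i=1}^k P_{\mathcal{M}}(t_i\mid t_1,\ldots,t_{i-1})$. The base case $k=0$ holds because $I_\varnothing=[0,1)$ has width $1$, which is the empty product. For the inductive step, subtract the left endpoint of $I_{x\cdot t}$ from its right endpoint in the recursive definition. This gives $|I_{x\cdot t}|=|I_x|\cdot P_{\mathcal{M}}(t\mid x)$, and substituting the inductive hypothesis for $|I_x|$ extends the product by one factor. Taking $k=n$ yields the first display of Proposition~\ref{prop:width}.

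\textbf{Termination factor.} This follows by direct substitution. The definition of $P_{\mathcal{M}}(s)$ in Section~\ref{sec:plt} is $P_{\mathcal{M}}(\$\mid s)$ times exactly the product just shown to equal $|I_s|$.

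\textbf{Code length.} This is the standard arithmetic-coding argument. First, I will observe that the child intervals of a node $x$ are disjoint and contained in $I_x$: they are consecutive, half-open, ordered by $\sigma$, and their total width is at most $|I_x|$, since the token probabilities sum to at most $1$ once $\$$ is included. By induction, the intervals at each depth are therefore pairwise disjoint, and $I_s$ is a well-defined subinterval of $[0,1)$.

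Next, set $\ell=\lceil -\log_2|I_s|\rceil+1$, so that $2^{-\ell}\le |I_s|/2$. Consider the dyadic grid of mesh $2^{-\ell}$. Any interval of width $|I_s|\ge 2\cdot 2^{-\ell}$ contains a full dyadic interval $[m2^{-\ell},(m+1)2^{-\ell})$. Take $z=m2^{-\ell}$, the left endpoint of that dyadic interval. Its $\ell$-bit binary expansion $m$ identifies $z$, and every real number sharing those first $\ell$ bits lies inside $I_s$. Hence the decoder can recover $s$ by descending the trie and, at each node, choosing the unique child interval that contains $z$.

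\textbf{Main obstacle.} The delicate point is what ``encodes $s$'' means. Because $s\in V^*$ has variable length, $z$ alone also lies in the intervals of every proper prefix of $s$, so the decoder cannot tell where to stop. I will handle this in one of two ways:
\begin{itemize}
\item read the claim as holding for a known $n$, or
\item treat termination as an interval step by also subdividing with the $\$$ edge, which costs $-\log_2 P_{\mathcal{M}}(\$\mid s)$ further bits.
\end{itemize}
I will state the bound for $|I_s|$ as written and remark on the $\$$ variant.

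A further caveat concerns the $\sigma$-ordering. As written, it ranges only over $V$, so the child intervals need not tile $I_x$. This does not matter for disjointness, which is all the argument uses.
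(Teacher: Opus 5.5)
Your proposal is correct and follows the paper's route: the same induction on prefix length gives the width identity (your base case $k=0$, the empty product, is slightly cleaner than the paper's $n=1$), and the termination factor is read off from the definition of $P_{\mathcal{M}}(s)$. For the code length, the paper only cites the standard fact from~\cite{witten1987} that an interval of width $w$ can be specified in $\lceil -\log_2 w\rceil + 1$ bits; your dyadic-subinterval argument proves that fact directly, and your caveat that the decoder must know $n$ matches Algorithm~\ref{alg:decode}, which takes $n$ as input.
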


\begin{proof}
By induction on $n$.  Base case $n=1$: $|I_{t_1}| = 1 \cdot P_{\mathcal{M}}(t_1 \mid \varnothing)$.  Inductive step: $|I_{x \cdot t}| = |I_x| \cdot P_{\mathcal{M}}(t \mid x)$, and by hypothesis $|I_x| = \prod_{i=1}^{k} P_{\mathcal{M}}(t_i \mid t_1,\ldots,t_{i-1})$, giving the product formula.  The code-length bound follows from the standard result that any interval of width $w$ can be encoded in $\lceil -\log_2 w \rceil + 1$ bits~\cite{witten1987}.
\end{proof}

\begin{corollary}[Expected code length]
\label{cor:entropy}
Let $\mathcal{D}$ be a distribution over $V^*$.  Then
\[
  \mathbb{E}_{s \sim \mathcal{D}}[L(s)] \leq H(\mathcal{D},\mathcal{M}) + 2,
\]
where $H(\mathcal{D},\mathcal{M}) = -\mathbb{E}_{s \sim \mathcal{D}}[\log_2 |I_s|]$
is the cross-entropy of $\mathcal{D}$ under $\mathcal{M}$ (using the prefix
probabilities, not including the termination factor).
When $\mathcal{D} = \mathcal{M}$, this equals $H(\mathcal{M}) + 2$, matching the
Shannon lower bound up to two bits (the constant-2 overhead is inherent to
integer-length codes; see~\cite{witten1987}, Theorem~1).
\end{corollary}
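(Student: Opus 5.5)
The plan is to take expectations of the pointwise code-length bound from Proposition~\ref{prop:width} and then identify the resulting quantity with the cross-entropy as the statement defines it.

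For every $s$ with $|I_s|>0$, Proposition~\ref{prop:width} gives $L(s) = \lceil -\log_2 |I_s| \rceil + 1$. The elementary inequality $\lceil y \rceil < y+1$ then yields the pointwise bound
\[
  L(s) < -\log_2 |I_s| + 2 .
\]
Taking $\mathbb{E}_{s\sim\mathcal{D}}$ of both sides and using linearity of expectation gives $\mathbb{E}[L(s)] \le -\mathbb{E}[\log_2|I_s|] + 2 = H(\mathcal{D},\mathcal{M}) + 2$. This is exactly the claimed inequality with the cross-entropy as defined in the statement, that is, via prefix widths and without the termination factor.

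Two technical points need care, and I expect neither to be a real obstacle.
\begin{itemize}[leftmargin=2em]
  \item \emph{Measure-zero and infinite cases.} If $\mathcal{D}$ puts mass on some $s$ with $|I_s|=0$, then both sides are $+\infty$ and the inequality holds trivially. In general, if $H(\mathcal{D},\mathcal{M})=\infty$ there is nothing to prove. Otherwise $-\log_2|I_s|$ is nonnegative and integrable, so exchanging expectation and the pointwise bound is justified by monotonicity. Since $V^*$ is countable, the expectations are just nonnegative series, and no measure-theoretic subtlety arises.
  \item \emph{Decodability.} The bound concerns the code lengths $L(s)$ assigned by the proposition, so correctness of the code is inherited from Proposition~\ref{prop:width} and the cited result of~\cite{witten1987}.
\end{itemize}

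For the case $\mathcal{D}=\mathcal{M}$, I substitute $\mathcal{D}$ by the law $P_{\mathcal{M}}$ and read $H(\mathcal{M})$ as $H(\mathcal{M},\mathcal{M}) = -\mathbb{E}_{s\sim\mathcal{M}}[\log_2|I_s|]$.

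The main subtle step is the comparison with the Shannon bound. The statement's cross-entropy omits the termination factor, so $-\log_2|I_s| = -\log_2 P_{\mathcal{M}}(s) + \log_2 P_{\mathcal{M}}(\$\mid s) \le -\log_2 P_{\mathcal{M}}(s)$. Hence $H(\mathcal{M},\mathcal{M})$ is at most the true Shannon entropy of the sequence distribution, and the upper bound is only strengthened.

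For the phrase ``matching the Shannon lower bound,'' I will argue as follows. The codewords for distinct complete sequences come from the intervals $I_s\cdot P_{\mathcal{M}}(\$\mid s)$ (the termination subintervals). These are disjoint, so the code is prefix-free. Kraft's inequality then gives $\mathbb{E}[L]\ge H(\mathcal{M})$ for any prefix code. I will present this as a remark that relies on the standard source-coding theorem, rather than re-deriving it.
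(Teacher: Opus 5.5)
Your proof of the inequality is the paper's argument: the pointwise bound $L(s) < -\log_2|I_s| + 2$ from Proposition~\ref{prop:width}, then taking expectations. Your handling of $|I_s|=0$ is fine. So is your observation that dropping the termination factor makes $H(\mathcal{M},\mathcal{M})$ no larger than the true entropy, which only strengthens the upper bound. The paper never proves the ``matching the Shannon lower bound'' clause; it only cites it. The argument you propose for that clause does not work, and the same objection applies to the paper's phrasing.

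The failing step is ``the codewords come from the termination subintervals, so the code is prefix-free.'' The lengths $L(s) = \lceil -\log_2|I_s|\rceil + 1$ are set by the prefix interval $I_s$. An $L(s)$-bit codeword is therefore only guaranteed to name a dyadic interval inside $I_s$. It need not lie inside the termination subinterval, whose width $P_{\mathcal{M}}(\$\mid s)\,|I_s|$ can be much smaller. Since $I_{s'} \subseteq I_s$ whenever $s$ is a prefix of $s'$, nothing prevents collisions, which is why Algorithm~\ref{alg:decode} must be told $n$.

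Concretely, take $V=\{a\}$ with $P_{\mathcal{M}}(\$\mid x) = q$. Then $s=a^n$ has probability $(1-q)^n q$ and $|I_s| = (1-q)^n$. This gives $\mathbb{E}[L] < \frac{1-q}{q}\log_2\frac{1}{1-q} + 2 < \frac{1}{\ln 2} + 2$ for every $q$. The entropy $h(q)/q$ of this sequence law grows without bound as $q\to 0$; at $q=0.01$ it is about $8.08$ bits. At that $q$, all $68$ sequences $a^1,\ldots,a^{68}$ receive $2$-bit codewords. So Kraft fails, the code is not uniquely decodable, and $\mathbb{E}[L]$ lies strictly below the entropy.

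A genuinely prefix-free code needs the termination-inclusive lengths $\lceil -\log_2 P_{\mathcal{M}}(s)\rceil + 1$, i.e.\ the $L_{\mathcal{M}}$ of Definition~\ref{def:hybrid}. For those, $H \le \mathbb{E}[L_{\mathcal{M}}] \le H + 2$ does hold. However, that upper bound is in terms of the full cross-entropy, not the prefix-only quantity in the statement. You should either state the lower-bound remark for that code, or say explicitly that $L(s)$ is a code length given $n$, to which the unconditional Shannon bound does not apply.
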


\begin{proof}
By Proposition~\ref{prop:width}, $|I_s| = \prod_{i} P_{\mathcal{M}}(t_i \mid t_1,\ldots,t_{i-1})$,
so
\[
  -\log_2 |I_s| = \textstyle\sum_i -\log_2 P_{\mathcal{M}}(t_i \mid t_1,\ldots,t_{i-1}).
\]
Since $L(s) \leq {-\log_2 |I_s|} + 2$, taking expectations:
$\mathbb{E}[L(s)] \leq H(\mathcal{D},\mathcal{M}) + 2$. \qed
\end{proof}

\needspace{4\baselineskip}
\subsection{Probability-Proportional Bijection and the Trie Metric}
\label{sec:bijection}

The map $\phi: V^* \to [0,1)$ sending $s$ to any point in $I_s$ is a
\emph{probability-proportional bijection}: high-probability sequences occupy large
subintervals and therefore require few bits; low-probability sequences occupy small
subintervals.  This differs fundamentally from fixed-radix
encodings such as base-256 or base-26, where each symbol receives an equal-width
subdivision regardless of its probability.

\begin{remark}[Ordering within $[0,1)$ is arbitrary]
\label{rem:ordering}
The bijection $\sigma: V \to \{1,\ldots,|V|\}$ determines the \emph{left-to-right
order} of child intervals within each node's subdivision.  By Proposition~\ref{prop:width},
the \emph{width} $|I_s| = P_{\mathcal{M}}(s)$ depends only on the model probabilities,
not on $\sigma$.  Consequently, the code length $L(s)$ and the compression optimality
of Corollary~\ref{cor:entropy} hold for \emph{every} choice of $\sigma$.
Any total ordering of $V$ produces an equally valid and equally optimal encoder.
\end{remark}

\begin{remark}[Locality is not preserved---and need not be]
\label{rem:locality}
The bijection $\phi$ is \emph{not} a space-filling curve in the sense of
Hilbert or Peano curves: it does not preserve locality.  Two sequences that are
semantically similar but share no common prefix will in general map to unrelated
parts of $[0,1)$.  This is intentional and correct.  Locality preservation would
require sacrificing probability-proportionality, and with it the near-optimal
compression guarantee of Corollary~\ref{cor:entropy}.

The natural notion of ``closeness'' for sequences in a PLT is not proximity
in $[0,1)$ but the \emph{trie metric}: the length of the longest common prefix.
Formally, define
\[
  d_{\mathcal{T}}(s, s') = -\log_2 P_{\mathcal{M}}(s \wedge s'),
\]
where $s \wedge s'$ denotes the longest common prefix of $s$ and $s'$, and
$P_{\mathcal{M}}(x) = \prod_{i=1}^{|x|} P_{\mathcal{M}}(t_i \mid t_1,\ldots,t_{i-1})$
is the probability of prefix $x$, with the convention $P_\mathcal{M}(\varnothing)=1$
so $d_\mathcal{T}(s,s)=0$.  Two sequences are close under $d_{\mathcal{T}}$
when they share a high-probability prefix---exactly the condition under which their
model-conditioned continuations are similar and cached artifacts transfer.
This is the metric used throughout Sections~\ref{sec:hybrid}--\ref{sec:hierarchical}
for approximation quality and residual analysis.
\end{remark}

\begin{remark}[$d_\mathcal{T}$ is a pseudometric]
\label{rem:metric}
$d_\mathcal{T}$ satisfies: (i) $d_\mathcal{T}(s,s) = 0$ for all $s$
(since $s \wedge s = s$ and $-\log_2 P_\mathcal{M}(s) \geq 0$ with equality only
if $P_\mathcal{M}(s)=1$, which does not hold for non-trivial sequences unless all
probability mass is on a single path); (ii) symmetry: $s \wedge s' = s' \wedge s$;
(iii) \emph{ultrametric} triangle inequality: $d_\mathcal{T}(s, s'') \leq \max(d_\mathcal{T}(s,s'),\, d_\mathcal{T}(s',s''))$, since the longest common prefix of $s$ and $s''$ is at least as long as the shorter of the longest common prefixes of $(s,s')$ and $(s',s'')$.
The ultrametric property is stronger than the ordinary triangle inequality and
reflects the tree structure: any two paths through a node are at most as far apart
as their distance to that node.
\end{remark}

Figure~\ref{fig:interval} illustrates a two-level interval subdivision showing
the recursive structure, with the trie alongside the interval for clarity.

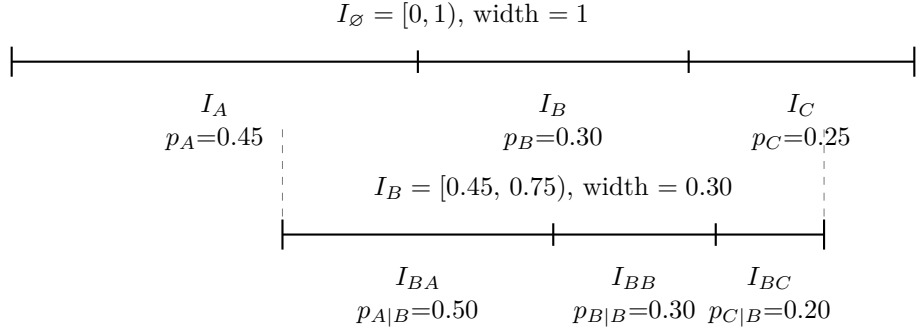
\begin{figure}[t]
\centering
\begin{tikzpicture}[
  seg/.style={draw=black, thick},
  lbl/.style={font=\small},
  node/.style={circle, draw=black, thick, minimum size=6mm, font=\small},
  edge_label/.style={font=\scriptsize, midway}
]
\node[lbl, above] at (6, 3.8) {$I_\varnothing = [0,1)$, width $= 1$};
\draw[seg] (0,3.5) -- (12,3.5);
\draw[seg] (0,3.7)  -- (0,3.3);
\draw[seg] (12,3.7) -- (12,3.3);

\draw[seg] (5.4, 3.65) -- (5.4, 3.35);
\draw[seg] (9.0, 3.65) -- (9.0, 3.35);

\node[lbl, below] at (2.7, 3.2) {$I_A$};
\node[lbl, below] at (7.2, 3.2) {$I_B$};
\node[lbl, below] at (10.5, 3.2) {$I_C$};
\node[lbl, below] at (2.7, 2.75) {$p_A{=}0.45$};
\node[lbl, below] at (7.2, 2.75) {$p_B{=}0.30$};
\node[lbl, below] at (10.5, 2.75) {$p_C{=}0.25$};

\node[lbl, above] at (7.2, 1.5) {$I_B = [0.45,\,0.75)$, width $= 0.30$};
\draw[seg] (3.6, 1.2) -- (10.8, 1.2);
\draw[seg] (3.6, 1.4) -- (3.6, 1.0);
\draw[seg] (10.8,1.4) -- (10.8,1.0);

\pgfmathsetmacro{\bsplitone}{3.6 + 7.2*0.50}  
\pgfmathsetmacro{\bsplittwo}{3.6 + 7.2*0.80}  
\draw[seg] (7.2, 1.35) -- (7.2, 1.05);
\draw[seg] (9.36, 1.35) -- (9.36, 1.05);

\node[lbl, below] at (5.4, 0.9)  {$I_{BA}$};
\node[lbl, below] at (8.28,0.9)  {$I_{BB}$};
\node[lbl, below] at (10.08,0.9) {$I_{BC}$};
\node[lbl, below] at (5.4, 0.45)  {$p_{A|B}{=}0.50$};
\node[lbl, below] at (8.28,0.45)  {$p_{B|B}{=}0.30$};
\node[lbl, below] at (10.08,0.45) {$p_{C|B}{=}0.20$};

\draw[dashed, gray] (3.6,2.6)  -- (3.6,1.4);
\draw[dashed, gray] (10.8,2.6) -- (10.8,1.4);

\end{tikzpicture}
\caption{Two-level frequency-weighted interval subdivision.  \emph{Top row:}
  the root interval $[0,1)$ is partitioned into $I_A$, $I_B$, $I_C$ with widths
  proportional to $p_A{=}0.45$, $p_B{=}0.30$, $p_C{=}0.25$.
  \emph{Bottom row:} the interval $I_B$ is recursively partitioned into
  $I_{BA}$, $I_{BB}$, $I_{BC}$ with widths proportional to the conditional
  probabilities $P(\cdot \mid B)$.
  The width of $I_{BA}$ equals $0.30 \times 0.50 = 0.15 = P_{\mathcal{M}}(BA)$,
  confirming Proposition~\ref{prop:width}.
  The left-to-right ordering of siblings within each row is determined by an
  arbitrary fixed bijection $\sigma$ and does not affect code lengths
  (Remark~\ref{rem:ordering}).  Locality in $[0,1)$ is \emph{not} preserved:
  $I_{BA}$ and $I_A$ are adjacent in the interval but unrelated in the trie.}
\label{fig:interval}
\end{figure}

\needspace{4\baselineskip}
\subsection{Encoding and Decoding Algorithms}

\begin{algorithm}[t]
\caption{PLT interval encoding}
\label{alg:encode}
\begin{algorithmic}[1]
\Require Sequence $s = (t_1,\ldots,t_n)$; model $\mathcal{M}$
\Ensure Interval $I_s = [a,b)$ and code length $L$
\State $[a,b) \gets [0,1)$; $x \gets \varnothing$
\For{$i = 1$ to $n$}
  \State Query $\mathcal{M}$: obtain $\{P_{\mathcal{M}}(t \mid x)\}_{t \in V}$
  \State Compute $C_{t_i}(x) \gets \sum_{\sigma(u)<\sigma(t_i)} P_{\mathcal{M}}(u \mid x)$
  \State $w \gets b - a$
  \State $b \gets a + w \cdot (C_{t_i}(x) + P_{\mathcal{M}}(t_i \mid x))$
  \State $a \gets a + w \cdot C_{t_i}(x)$
  \State $x \gets (x, t_i)$
\EndFor
\State \Return $[a,b)$, $\lceil \log_2(b-a)^{-1} \rceil + 1$
\end{algorithmic}
\end{algorithm}

\begin{algorithm}[t]
\caption{PLT interval decoding}
\label{alg:decode}
\begin{algorithmic}[1]
\Require Real number $z \in [0,1)$; model $\mathcal{M}$; length $n$
\Ensure Sequence $s$
\State $[a,b) \gets [0,1)$; $x \gets \varnothing$; $s \gets ()$
\For{$i = 1$ to $n$}
  \State Query $\mathcal{M}$: obtain $\{P_{\mathcal{M}}(t\mid x)\}_{t \in V}$
  \State Find $t$ such that $z \in [a + (b-a)C_t(x),\; a + (b-a)(C_t(x)+P_{\mathcal{M}}(t\mid x)))$
  \State Append $t$ to $s$; update $[a,b)$ to child interval; $x \gets (x,t)$
\EndFor
\State \Return $s$
\end{algorithmic}
\end{algorithm}

\textbf{Complexity.}  Both algorithms require $O(n)$ calls to $\mathcal{M}$ and
$O(n \cdot |V|)$ arithmetic operations.  When prefix distributions are cached at
frequently visited trie nodes, the effective cost for repeated prefixes is
significantly lower.

\needspace{5\baselineskip}
\section{Hybrid Compression Architecture}
\label{sec:hybrid}

A PLT compresses sequences that are well-predicted by $\mathcal{M}$.  Real datasets
inevitably contain rare or surprising sequences for which $P_{\mathcal{M}}(s)$ is
small and the code length $-\log_2 P_{\mathcal{M}}(s)$ is large.  We handle these
with a \emph{sparse residual store}.

\needspace{4\baselineskip}
\subsection{Trie-Covered and Residual Decomposition}

\begin{definition}[Hybrid compression architecture]
\label{def:hybrid}
Let $\mathcal{D} = \{s_1,\ldots,s_m\}$ be a dataset and let $\tau > 0$ be a
length threshold.  Partition $\mathcal{D}$ as
\[
  \mathcal{D} = C_T \cup C_R,
  \quad C_T = \{s \in \mathcal{D} : L_{\mathcal{M}}(s) \leq \tau\},
  \quad C_R = \mathcal{D} \setminus C_T,
\]
where $L_{\mathcal{M}}(s) = \lceil -\log_2 P_{\mathcal{M}}(s) \rceil + 1$.
The \emph{hybrid description length} is
\[
  L(\mathcal{D}) = L(\mathcal{M}) + \sum_{s \in C_T} L_{\mathcal{M}}(s) + L_{\mathrm{res}}(C_R),
\]
where $L(\mathcal{M})$ is the cost of transmitting $\mathcal{M}$ and
$L_{\mathrm{res}}(C_R)$ is the cost of storing the residuals (e.g., via a
B-tree or separate lossless coder).
\end{definition}

\needspace{4\baselineskip}
\subsection{Escape Transitions}

To handle residuals within the encoding stream, we augment $V$ with an escape
symbol $E$ and define a modified distribution:
\[
  P'(t \mid x) =
  \begin{cases}
    (1 - \epsilon) \cdot P_{\mathcal{M}}(t \mid x) & t \in V, \\
    \epsilon & t = E,
  \end{cases}
\]
for a small escape probability $\epsilon > 0$.  When the encoder encounters a
sequence $s \in C_R$, it emits $E$ at the point of divergence and stores
the remainder as a correction in the residual store.

\needspace{4\baselineskip}
\subsection{Connection to Shannon Entropy}

Shannon's noiseless coding theorem~\cite{shannon1948} establishes that the expected
code length satisfies
\[
  \mathbb{E}[L(X)] \geq H(X)
\]
for any lossless code, with equality achieved by an arithmetic code matched to the
true source distribution.  Our hybrid architecture achieves this bound when
$\mathcal{M} = \mathcal{D}$, i.e., when the model exactly matches the source:

\begin{proposition}[Optimality under matched model]
\label{prop:optimal}
If $P_{\mathcal{M}} = P_{\mathcal{D}}$ (the model matches the data distribution),
then $\mathbb{E}_{s \sim \mathcal{D}}[L_{\mathcal{M}}(s)] \leq H(\mathcal{D}) + 2$.
\end{proposition}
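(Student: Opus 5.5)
The argument is a direct pointwise bound followed by taking expectations. Here $L_{\mathcal{M}}(s) = \lceil -\log_2 P_{\mathcal{M}}(s) \rceil + 1$ is defined with the \emph{full} sequence probability, including the termination factor, as in Definition~\ref{def:hybrid}. The first step is the elementary inequality $\lceil y \rceil < y + 1$. It gives, for every $s$ with $P_{\mathcal{M}}(s) > 0$,
\[
  L_{\mathcal{M}}(s) \leq -\log_2 P_{\mathcal{M}}(s) + 2 .
\]
Taking expectations under $s \sim \mathcal{D}$ then yields $\mathbb{E}[L_{\mathcal{M}}(s)] \leq -\mathbb{E}_{s\sim\mathcal{D}}[\log_2 P_{\mathcal{M}}(s)] + 2$. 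The right-hand side is the cross-entropy of $\mathcal{D}$ relative to $\mathcal{M}$ on complete sequences, plus two.

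The second step uses the hypothesis $P_{\mathcal{M}} = P_{\mathcal{D}}$. Substituting it, the cross-entropy term becomes $-\sum_s P_{\mathcal{D}}(s)\log_2 P_{\mathcal{D}}(s) = H(\mathcal{D})$, which gives the claim. I would stress that the hypothesis is read as equality of the distributions over complete sequences (termination included). This is exactly the object $L_{\mathcal{M}}$ is built from, so no extra term appears.

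One could instead route the argument through Corollary~\ref{cor:entropy}, but that corollary measures length via $|I_s|$, which omits the termination factor. By Proposition~\ref{prop:width}, $P_{\mathcal{M}}(s) = P_{\mathcal{M}}(\$\mid s)\,|I_s| \leq |I_s|$, so $-\log_2 |I_s| \leq -\log_2 P_{\mathcal{M}}(s)$. The corollary's bound is therefore weaker than what we need to relate to $H(\mathcal{D})$ on complete sequences, and the direct route above is cleaner.

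The main obstacle is bookkeeping rather than analysis, and it has two parts. The first is zero-probability sequences, for which $L_{\mathcal{M}}(s) = \infty$. Under matched models these have $\mathcal{D}$-measure zero, and with the convention $0\log 0 = 0$ they contribute nothing to either side. The second is the case where $\mathcal{D}$ has infinite support. Then $H(\mathcal{D})$ may be infinite, which makes the bound trivial, or else the expectation is a convergent sum of nonnegative terms, so the pointwise inequality can be summed termwise. I would state both conventions explicitly and then conclude.
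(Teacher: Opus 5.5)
Your proof is correct, and its core is the same as the paper's. You use the pointwise bound $\lceil y\rceil + 1 < y + 2$, take expectations, and use $P_{\mathcal{M}} = P_{\mathcal{D}}$ to turn the cross-entropy into $H(\mathcal{D})$. The difference is in routing.

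The paper does not redo the ceiling bound. It writes $H(\mathcal{D},\mathcal{M}) = -\mathbb{E}_{s\sim\mathcal{D}}[\log_2 P_{\mathcal{M}}(s)] = H(\mathcal{D})$ and cites Corollary~\ref{cor:entropy}. As you noticed, that corollary concerns $L(s) = \lceil -\log_2 |I_s|\rceil + 1$ and defines $H(\mathcal{D},\mathcal{M})$ through $|I_s|$, i.e.\ without the factor $P_{\mathcal{M}}(\$\mid s)$. So the paper's two-line proof silently identifies $|I_s|$ with $P_{\mathcal{M}}(s)$, as Remark~\ref{rem:ordering} also does. Read literally, the corollary bounds the expectation of the \emph{shorter} length $L(s) \le L_{\mathcal{M}}(s)$. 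It therefore cannot by itself yield the stated bound on $\mathbb{E}[L_{\mathcal{M}}(s)]$.

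Your direct argument on the termination-inclusive probability is what actually proves the proposition as stated. It is also self-contained, since it uses no property of the interval construction; the paper's version buys only brevity.

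One wording point: the corollary is not so much ``weaker'' than what you need as a bound on a different, smaller quantity. This does not affect your proof, which never relies on it. Your treatment of null sequences and infinite support is fine, because monotonicity of expectation for nonnegative functions gives the inequality in $[0,\infty]$ directly.
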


\begin{proof}
When $P_{\mathcal{M}} = P_{\mathcal{D}}$, the cross-entropy equals the entropy:
$H(\mathcal{D}, \mathcal{M}) = -\mathbb{E}_{s \sim \mathcal{D}}[\log_2 P_{\mathcal{M}}(s)]
= -\mathbb{E}_{s \sim \mathcal{D}}[\log_2 P_{\mathcal{D}}(s)] = H(\mathcal{D})$.
The result then follows immediately from Corollary~\ref{cor:entropy}. \qed
\end{proof}

When $\mathcal{M} \neq \mathcal{D}$, the average code length is the cross-entropy
$H(\mathcal{D}, \mathcal{M}) = H(\mathcal{D}) + D_{\mathrm{KL}}(\mathcal{D} \| \mathcal{M})$,
which exceeds the entropy by the KL divergence between source and model.

\needspace{4\baselineskip}
\subsection{Kolmogorov-Style Interpretation}

Kolmogorov complexity $K(s)$ of a sequence $s$ is the length of the shortest
program that generates $s$ on a universal Turing machine~\cite{kolmogorov1965}:
\[
  K(s) = \min_{p : U(p) = s} |p|.
\]
Though uncomputable, it provides a conceptual benchmark.  In our setting, the
PLT induced by a large generative model $\mathcal{M}$ acts as a compact
\emph{effective program} that generates most of the observed corpus.  The hybrid
scheme therefore operationalizes a computable approximation to Kolmogorov
compression:

\begin{definition}[Practical Kolmogorov optimality]
\label{def:kolmogorov}
Fix a description scheme in which any dataset $\mathcal{D}$ can be encoded as a
pair $(\mathcal{M}, C_R)$ with total description length
$L(\mathcal{M}) + |C_R| \cdot \bar{L}_{\mathrm{res}}$, where $L(\mathcal{M})$ is
measured in bits and $\bar{L}_{\mathrm{res}}$ is the average per-residual code
length under a universal lossless code (e.g., Lempel--Ziv~\cite{ziv1977}).
A hybrid architecture $(\mathcal{M}, C_R)$ is \emph{practically Kolmogorov-optimal}
for dataset $\mathcal{D}$ with respect to model class $\mathfrak{M}$ if
\[
  L(\mathcal{M}) + |C_R| \cdot \bar{L}_{\mathrm{res}}
  = \min_{\mathcal{M}' \in \mathfrak{M}} \bigl[
      L(\mathcal{M}') + |\mathcal{D} \setminus C_T(\mathcal{M}')| \cdot \bar{L}_{\mathrm{res}}
    \bigr],
\]
where $C_T(\mathcal{M}')$ denotes the trie-covered set under model $\mathcal{M}'$
at threshold $\tau$.  That is, no other model in $\mathfrak{M}$ achieves a shorter
total description of $\mathcal{D}$ under this scheme.
\end{definition}

In practice, when a strong model satisfies $|C_R|/|\mathcal{D}| \approx \epsilon$
for small $\epsilon$, the description length approaches the entropy of the
trie-covered majority $C_T$ plus a small overhead for the residuals.

\needspace{4\baselineskip}
\subsection{Approximate Compression and Rate--Distortion}

When exact reconstruction is not required, the encoder may map each $s \in C_R$ to
its nearest representative $\tilde{s} \in C_T$ on the trie manifold.  The
relevant theoretical framework is the rate--distortion function~\cite{berger1971}:
\[
  R(D) = \min_{P(\tilde{s}\mid s) :\, \mathbb{E}[d(s,\tilde{s})] \leq D} I(S;\tilde{S}),
\]
where $d(\cdot,\cdot)$ is a task-appropriate distortion measure.  The natural
distortion measure induced by the PLT is the \emph{trie metric}
(Remark~\ref{rem:locality}):
\[
  d(s, \tilde{s}) = -\log_2 P_{\mathcal{M}}(s \wedge \tilde{s}),
\]
where $s \wedge \tilde{s}$ is the longest common prefix of $s$ and $\tilde{s}$.
This measures how far the two sequences diverge within the trie: if they share a
long, high-probability common prefix, distortion is low; if they diverge early,
distortion is high.  Importantly, this is a genuine pairwise measure on $(s, \tilde{s})$,
not a property of $\tilde{s}$ alone.  Under this distortion, the optimal mapping
$s \mapsto \tilde{s}$ selects the cached sequence sharing the longest common
prefix with $s$:
\[
  \tilde{s}(s) = \arg\max_{s' \in C_T} |s \wedge s'|
  \quad \bigl(= \arg\min_{s' \in C_T} d(s, s')\bigr).
\]

This is the sense in which the hybrid architecture achieves rates below the lossless
entropy floor under a lossy constraint: by accepting bounded distortion
$D = -\log_2 P_{\mathcal{M}}(s \wedge \tilde{s})$, the rate--distortion function
$R(D)$ allows description lengths strictly below $H(\mathcal{D})$, consistent with
Shannon's source coding theorem for lossy coding~\cite{berger1971}.
The PLT-covered set $C_T$ acts as a learned codebook: probability mass is
concentrated where data is dense, enabling rates that would be impossible under the
lossless constraint $D = 0$.

\begin{remark}[Systems implications]
  The same trade-off applies beyond data compression: blockchain-style consensus,
  exact-match caching, and strict source-of-truth architectures often spend
  disproportionate resources on the last fraction of exactness.  The hybrid
  framework suggests a principled way to quantify and bound the cost of that
  exactness.
\end{remark}

\needspace{5\baselineskip}
\section{Policy-Weighted Languages for Decision Systems}
\label{sec:policy}

The PLT formalism extends naturally from passive data compression to
\emph{active} sequential decision making.  The key observation is that a
policy $\pi$ defines a conditional distribution over actions, which induces a PLT
over action sequences by the same construction as Definition~\ref{def:plt}.

\needspace{4\baselineskip}
\subsection{Decision Sequences as Languages}

Let $\mathcal{S}$ be a state space and $\mathcal{A}(s)$ be the set of actions
available in state $s$.  A policy $\pi: \mathcal{S} \times \mathcal{A} \to \mathbb{R}_{\geq 0}$
assigns non-negative weights to state--action pairs.  Normalize to obtain:
\[
  P_\pi(a \mid s) = \frac{\pi(s,a)}{\sum_{a' \in \mathcal{A}(s)} \pi(s,a')}.
\]
A trajectory $\tau = (s_0, a_0, s_1, a_1, \ldots, s_n)$ is a sequence of
state--action pairs, and its probability under $\pi$ is
$P_\pi(\tau) = \prod_{i} P_\pi(a_i \mid s_i) \cdot P(s_{i+1} \mid s_i, a_i)$.
The PLT over action sequences (marginalizing over states) compresses experience,
ranks prefixes by policy value, and organizes reusable trajectory fragments.

\needspace{4\baselineskip}
\subsection{Games: MCTS-Weighted Opening Tries}
\label{sec:games}

In two-player perfect-information games such as chess or Go, a high-quality policy
can be derived from MCTS visit counts.  Let $N(s,m)$ denote the number of times
move $m$ was explored from position $s$ during MCTS.  Define
\[
  P_{\mathrm{MCTS}}(m \mid s) = \frac{N(s,m)}{\sum_{m'} N(s,m')}.
\]
The PLT over move sequences under $P_{\mathrm{MCTS}}$ has the following properties:

\begin{remark}[Game-trie properties]
\label{rem:game}
Under $P_{\mathrm{MCTS}}$:
\begin{enumerate}[leftmargin=2em]
  \item Common opening sequences (e.g., the Ruy Lopez, Sicilian Defence) have high
        $P_{\mathrm{MCTS}}$ by definition of visit-count normalization, so by
        Proposition~\ref{prop:width} they occupy large subintervals and receive
        short codes.
  \item Blunders and rarely played continuations have low visit counts, hence low
        probability, hence small intervals; they fall into the residual store $C_R$
        when $L_{\mathcal{M}}(s) > \tau$.
  \item The trie organizes moves into a hierarchy of strategic signposts:
        prefixes correspond to named openings, mid-game motifs, and endgame
        structures.
  \item The code length $L(s) = \lceil -\log_2 P_{\mathrm{MCTS}}(s) \rceil + 1$
        of a game record measures the \emph{unpredictability} of the game from the
        engine's perspective; it is a formal novelty score.
\end{enumerate}
Items 1 and 2 follow immediately from Proposition~\ref{prop:width} and
Definition~\ref{def:hybrid}; items 3 and 4 are interpretive consequences.
\end{remark}

This interpretation is distinct from previous uses of tries in game trees
(e.g., opening books): here the trie is a \emph{compressor} of game records,
not merely a lookup table.  Several non-obvious consequences follow.

\textbf{Novelty detection.}  A move that falls into the residual store ---
whose code length under the MCTS trie exceeds threshold $\tau$ --- is by definition
a \emph{novelty}: a line not well-explored by the engine.  The PLT provides
an automatic novelty detector with a threshold tied directly to escape probability
$\epsilon$, requiring no separate classification step.

\textbf{Style as distribution.}  Two players with similar styles produce game
records with high overlap in the trie.  The KL divergence between their
respective PLTs quantifies stylistic similarity, suggesting a
compression-based approach to player profiling and matchmaking that does not
require hand-crafted features.

\textbf{Unifying opening books and tablebases.}  Endgame tablebases store the
optimal outcome for every position below a piece-count threshold.  In PLT terms,
these are exact residuals: the endgame language has low entropy (outcomes are
forced), so $C_T$ is small and $C_R$ dominates.  The PLT framework therefore
unifies opening books (high-probability prefixes in $C_T$) and tablebases
(exact residuals in $C_R$) under a single hybrid architecture.

\textbf{Transfer across game variants.}  The MCTS trie for standard chess and
Chess960 share a large common subtree (all positions reachable after move 10 are
identically distributed under most opening preparation).  The PLT provides a
principled mechanism for knowledge transfer: the shared subtree is reused directly,
and only the diverging branches require fresh MCTS computation.

\needspace{4\baselineskip}
\subsection{Search Engines: Workflow-Weighted Session Tries}
\label{sec:search}

A user session can be modeled as a sequence of queries and page visits:
\[
  \sigma = (q_1, p_1, q_2, p_2, \ldots, q_k, p_k),
\]
where $q_i \in \mathcal{Q}$ is a query and $p_i \in \mathcal{P}$ is a page.
Traditional IR systems such as PageRank assign authority scores to individual
pages~\cite{page1999}; the session probability $P(\sigma)$ is not modeled.

The PLT framework suggests a fundamentally different objective: rather than ranking
documents, rank \emph{workflows} by their probability under a session model.
Let $\pi_{\mathrm{session}}(q_i, p_i \mid q_1,p_1,\ldots,q_{i-1},p_{i-1})$
estimate the probability that a user takes action $(q_i, p_i)$ given prior context.
The resulting PLT assigns large intervals to common productive workflows (e.g.,
a user searching for a flight and completing a booking) and small intervals to
abandoned or unusual sessions.

\begin{definition}[Workflow language]
The \emph{workflow language} of a search system is the PLT over session sequences
induced by the session policy $\pi_{\mathrm{session}}$.  A workflow optimizer
selects actions at each step to maximize the probability of the current prefix
under the workflow language.
\end{definition}

This extends PageRank from a stationary distribution over nodes to a
\emph{stationary distribution over paths}, enabling optimization of complete task
sequences rather than isolated actions.

The shift has several concrete implications beyond ranking.

\textbf{Workflow compression.}  A session-log database can be compressed using
the workflow PLT exactly as a text corpus is compressed using a language PLT.
Common productive sessions occupy large intervals; abandoned or anomalous sessions
occupy small intervals or fall into residuals.  The compression ratio directly
measures how well the system's model captures user behavior.

\textbf{Proactive prefetching.}  At any point in a session, the current prefix
determines a node in the workflow trie whose child distribution predicts the most
likely next actions.  The system can prefetch the top-$K$ continuations by prior
probability, reducing latency for the most likely next steps.  By
Theorem~\ref{thm:prior_cache}, this prior-guided prefetching dominates reactive
caching during the early phase of a new session --- precisely when personalization
data is scarcest.

\textbf{Task-completion as a language.}  Traditional IR optimizes relevance at the
document level.  The workflow PLT optimizes \emph{task completion probability}:
the probability that the current session prefix reaches a high-value terminal state
(purchase completed, answer found, form submitted).  This reframes IR as a
language modeling problem over session sequences, where perplexity measures how
well the system predicts complete workflows, not individual clicks.

\textbf{Anomalous session detection.}  Sessions whose code length under the
workflow PLT exceeds threshold $\tau$ are residuals: they deviate significantly
from the modeled distribution.  Such sessions are natural candidates for fraud
detection and security monitoring without requiring a separate anomaly model.

\needspace{4\baselineskip}
\subsection{Robotics, Agents, and Organizations}

The framework extends identically to physical and organizational domains.

\textbf{Robot control.}  Action sequences are motor commands; the policy is a
learned controller.  Common manipulation trajectories (e.g., pick-and-place in
a known environment) receive short codes and can be memoized as reusable motion
primitives.  Novel situations --- an object in an unexpected position,
an obstacle in a familiar path --- produce long codes and trigger recomputation
or fallback to slower deliberative planning.  The code length of a trajectory
is thus a real-time measure of task novelty, providing a principled trigger
for switching between cached and recomputed behavior.

\textbf{Network routing.}  Traffic flows are sequences of forwarding decisions;
the policy is a traffic model learned from observed flows.  The PLT compresses
routing tables (frequent paths receive shorter codes), predicts congestion
(high-entropy nodes correspond to unpredictable traffic), and enables
proactive route caching for common source-destination pairs.

\textbf{Organizational workflows.}  Business processes are sequences of steps
(approvals, document submissions, notifications); the policy is learned from
historical execution logs.  The PLT identifies the most probable process variants
and flags deviations as residuals.  In audit contexts, the residuals are precisely
the exceptional cases warranting human review, with a principled threshold for
what counts as ``exceptional.''  Process mining~\cite{van2016} currently uses
heuristic deviation measures; the PLT provides an information-theoretic replacement.

\textbf{LLM agent loops.}  Tool invocations in an agent system are action
sequences; the policy is the agent's planning distribution.  The PLT predicts
which tool-call sequences are likely, enabling proactive artifact retrieval
(Section~\ref{sec:execution}).  Moreover, the trie structure makes the agent's
planning transparent: each step in the tool-call sequence is annotated with
its prior probability, and deviations from the most probable plan are
immediately visible as low-probability transitions.

In each domain, the incremental update rule
\[
  P_{t+1}(a \mid s) \propto (1-\alpha) P_t(a \mid s) + \alpha \hat{P}(a \mid s)
\]
allows the trie to adapt online as new observations accumulate, improving
compression and decision quality over time.

\needspace{4\baselineskip}
\subsection{Unified Compression View}

Across all domains, the PLT plays three simultaneous roles:

\begin{enumerate}[leftmargin=2em]
  \item \textbf{Compressor of experience}: high-probability trajectories receive
        short codes.
  \item \textbf{Policy map}: the transition probabilities encode learned action
        preferences.
  \item \textbf{Structural index}: the trie organizes experience into a
        hierarchy of reusable prefixes (openings, workflows, maneuvers).
\end{enumerate}

The key insight is that these three roles are \emph{not independent features}
but are \emph{derived from a single probability measure} on sequence space.
Any improvement to the model $\mathcal{M}$ simultaneously improves all three.

\needspace{5\baselineskip}
\section{Artifact Memoization and Compression of Model Execution}
\label{sec:execution}

We now apply the PLT framework recursively to the \emph{execution} of
generative models themselves.  The central insight is that the same probability
measure that defines the compression code also defines the optimal caching policy:
high-probability nodes in the execution trie correspond to computations worth
memoizing, and the model's own prior identifies them \emph{before} any requests
have been observed.

\needspace{4\baselineskip}
\subsection{Artifacts and Content-Addressable Storage}

\begin{definition}[Artifact]
An \emph{artifact} is a deterministic output $a = f(i)$ produced by a function
$f$ (a model, tool, or sub-computation) on input $i$.  Artifacts are stored
under a content address $h = H(f, i)$, where $H$ is a cryptographic hash function.
\end{definition}

Artifacts include generated text, images, code, reasoning traces, plans, and
intermediate tool results.  The key property is \emph{determinism}: given the
same $(f, i)$, the same $a$ is produced, so $h$ uniquely identifies the artifact.
This is not a restriction in practice: for stochastic models one fixes the random
seed as part of $i$, or stores the output distribution rather than a single sample.

\needspace{4\baselineskip}
\subsection{The Execution Language}

The sequence of tool or model invocations in an agentic system forms an
\emph{execution trace}:
\[
  e = ((f_1, i_1, a_1), (f_2, i_2, a_2), \ldots, (f_n, i_n, a_n)).
\]
Projecting onto invocation prefixes $(f_1,i_1), (f_2,i_2), \ldots$ yields a
language over the alphabet of (function, input) pairs.  The PLT over this language,
induced by the agent's planning distribution $P_{\mathcal{M}}$, assigns probabilities
to future invocations and organizes execution histories into a reuse-aware structure.

Concretely, two agents solving related tasks will share a long common prefix in
the execution trie --- the shared setup, retrieval, and reasoning steps --- before
diverging at the point where their specific goals differ.  The PLT makes this
shared prefix explicit and memoizable.

\needspace{4\baselineskip}
\subsection{Artifact Reuse as Compression of Computation}

Let $C_c$ be the cost of computing an artifact and $C_l$ be the cost of retrieving
a cached artifact.  For a transformer with context length $n$:
\[
  C_c = O(n^2), \qquad C_l = O(\log N),
\]
where $N$ is the number of stored artifacts.  The ratio $C_c/C_l = O(n^2/\log N)$
grows rapidly with context length, making caching increasingly valuable at scale.

The expected cost of a hybrid system with reuse probability $p_r$ is:
\[
  \mathbb{E}[C] = p_r \cdot O(\log N) + (1-p_r) \cdot O(n^2).
\]
As the artifact store grows and $p_r \to p^* = \sum_{j=1}^K p_j$
(the cumulative probability of the top-$K$ inputs), the system's amortized
inference cost approaches $O(\log N)$ for the fraction $p^*$ of queries.

\needspace{4\baselineskip}
\subsection{Prior-Guided vs.\ Empirical-Frequency Caching}
\label{sec:prior_cache}

Standard caches (LRU, LFU, semantic similarity~\cite{bang2023}) populate based on
\emph{observed} request frequencies.  They require a warmup phase before their
contents reflect the true request distribution.  We now make this cost precise
and show that PLT-guided caching eliminates it.

Let inputs be drawn i.i.d.\ from $P_{\mathcal{M}}$ over a support of size $M$,
with probabilities ranked $p_1 \geq p_2 \geq \cdots \geq p_M > 0$.
The \emph{optimal cache} is $\mathcal{C}^* = \{i_1,\ldots,i_K\}$, achieving
steady-state hit rate $p^* = \sum_{j=1}^K p_j$.
Define the \emph{gap at the cache boundary}:
\[
  \Delta = p_K - p_{K+1}.
\]
This gap governs how quickly an empirical cache can correctly identify $\mathcal{C}^*$.

\begin{definition}[Prior-guided cache]
\label{def:prior_cache}
A \emph{prior-guided cache} of size $K$ initializes with $\mathcal{C}^*$
(the top-$K$ inputs by prior probability) and holds it for all $T \geq 1$.
Its per-request cost is $C^{\mathrm{prior}} = (1-p^*)C_c + p^* C_l$ from
the first request onward.
\end{definition}

\begin{lemma}[LFU ranking threshold]
\label{lem:lfu_rank}
Let $\hat{p}_j(T) = n_j(T)/T$ denote the empirical frequency of input $i_j$
after $T$ i.i.d.\ requests.  For any pair $(j,l)$ with $j \leq K < l$
(so $p_j \geq p_K > p_{K+1} \geq p_l$ and $p_j - p_l \geq \Delta$),
\[
  P\!\left(\hat{p}_j(T) \leq \hat{p}_l(T)\right)
  \leq \exp\!\left(-T\Delta^2 / 2\right).
\]
\emph{Proof.}
Let $X_k = \mathbf{1}[r_k = i_j] - \mathbf{1}[r_k = i_l]$ for the $k$-th request $r_k$.
Each $X_k$ is i.i.d.\ with mean $\mu = p_j - p_l \geq \Delta$ and $|X_k| \leq 1$.
Then $\hat{p}_j(T) - \hat{p}_l(T) = \frac{1}{T}\sum_{k=1}^T X_k$.
By Hoeffding's inequality for bounded zero-mean variables (shifting by $\mu$):
$P(\hat{p}_j(T) - \hat{p}_l(T) \leq 0) = P\!\left(\frac{1}{T}\sum_k (X_k - \mu) \leq -\mu\right)
\leq \exp(-2T\mu^2/4) = \exp(-T\mu^2/2) \leq \exp(-T\Delta^2/2),$
where the factor $4$ in the denominator accounts for the range $[-1,1]$ of each $X_k - \mu$.

By a union bound over all $K(M-K)$ boundary pairs, the probability that LFU
has an incorrect ranking at time $T$ satisfies
\[
  P\!\left(\mathcal{C}_T^{\mathrm{LFU}} \neq \mathcal{C}^*\right)
  \leq K(M-K)\exp(-T\Delta^2/2).
\]
For this to be at most $\delta$, one needs
$T \geq T_{\mathrm{rank}}(\delta) = \frac{2}{\Delta^2}\ln\frac{K(M-K)}{\delta}$. \qed
\end{lemma}

\begin{lemma}[LFU swap completion]
\label{lem:lfu_swap}
Even after frequency estimates are correctly ranked, LFU must complete up to $K$
cache swaps before $\mathcal{C}^*$ is fully installed: each item of $\mathcal{C}^*$
must be requested at least once.  Let $T_{\mathrm{swap}}$ be the first time all $K$
items have been requested at least once.  Then:
\[
  \mathbb{E}[T_{\mathrm{swap}}] = \sum_{j=1}^{K} \frac{1}{p_j} \leq \frac{K}{p_K}.
\]
Moreover, for any $T \leq K/(2p_K)$,
\[
  P(T_{\mathrm{swap}} > T) \geq 1 - \frac{T p_K}{K} \geq \frac{1}{2}.
\]
\emph{Proof.}
The expected waiting time for item $i_j$ to appear is $1/p_j$ (geometric distribution
with parameter $p_j$).  The $K$ waiting times are not independent, but since
$p_j \leq p_1$ for all $j$, the sum $T_{\mathrm{swap}} \geq \max_j W_j$ where
$W_j \sim \mathrm{Geom}(p_j)$.  For the expectation: by the coupon-collector
structure, $\mathbb{E}[T_{\mathrm{swap}}] = \sum_{j=1}^K 1/p_j \leq K/p_K$.
For the lower tail: by Markov's inequality applied to $T_{\mathrm{swap}}$,
$P(T_{\mathrm{swap}} \leq T) \leq T/\mathbb{E}[T_{\mathrm{swap}}] \leq T p_K/K$.
Hence $P(T_{\mathrm{swap}} > T) \geq 1 - T p_K/K$, which is $\geq 1/2$ when
$T \leq K/(2p_K)$. \qed
\end{lemma}

\begin{theorem}[Prior-guided caching advantage]
\label{thm:prior_cache}
Let requests be i.i.d.\ from $P_{\mathcal{M}}$ over support of size $M$, with
$\Delta = p_K - p_{K+1} > 0$.
Let $\rho = C_c - C_l > 0$.  Define
\[
  T_0(\delta) = \min\!\left(\frac{2\ln(K(M-K)/\delta)}{\Delta^2},\;
                             \frac{K}{2p_K}\right).
\]
Then for all $T \leq T_0(\delta)$,
\[
  \mathbb{E}[C^{\mathrm{LFU}}(T)] - C^{\mathrm{prior}}(T) \geq \frac{1}{2}\,\Delta\,\rho\,
  \min\!\left(\delta,\,1 - \frac{T p_K}{K}\right) > 0.
\]
In particular, taking $\delta = 1/2$ gives a gap of at least
$\frac{1}{4}\Delta\rho > 0$ for all $T \leq T_0(1/2)$.
As $T \to \infty$, both strategies converge to the same steady-state cost.
\end{theorem}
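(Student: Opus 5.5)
The argument compares the two caches request by request and reduces the cost gap to a probability lower bound, using Lemma~\ref{lem:lfu_rank} and Lemma~\ref{lem:lfu_swap}. For either cache, the expected cost at time $T$ is $C_c - \rho\cdot(\text{hit rate at time }T)$. For the prior-guided cache (Definition~\ref{def:prior_cache}) the hit rate is exactly $p^*$ at every $T$, so
\[
  \mathbb{E}[C^{\mathrm{LFU}}(T)] - C^{\mathrm{prior}}(T) = \rho\bigl(p^* - \mathbb{E}[h_T]\bigr),
\]
where $h_T = \sum_{i\in\mathcal{C}_T^{\mathrm{LFU}}} p_i$ is the hit rate of the random LFU cache contents. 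The task is therefore to show $p^* - \mathbb{E}[h_T] \ge \tfrac12\Delta\min(\delta, 1 - Tp_K/K)$.

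\textbf{Pointwise deficit.} Whenever $\mathcal{C}_T^{\mathrm{LFU}} \ne \mathcal{C}^*$, the LFU cache holds some $i_l$ with $l > K$ in place of some $i_j$ with $j \le K$. Because $\mathcal{C}^*$ maximizes $\sum_{i\in S} p_i$ over $|S| = K$, any other $K$-set loses at least $p_j - p_l \ge \Delta$. So $p^* - h_T \ge \Delta\,\mathbf{1}[\mathcal{C}_T^{\mathrm{LFU}} \ne \mathcal{C}^*]$ always. (If the cache is only partially filled, it holds fewer than $K$ items, and the deficit is at least $p_K \ge \Delta$.) Taking expectations gives
\[
  p^* - \mathbb{E}[h_T] \ge \Delta\, P\bigl(\mathcal{C}_T^{\mathrm{LFU}} \ne \mathcal{C}^*\bigr),
\]
and it remains to lower-bound this failure probability for $T \le T_0(\delta)$.

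\textbf{Lower-bounding the failure probability.} The event $\{T_{\mathrm{swap}} > T\}$ means some item of $\mathcal{C}^*$ has never been requested, so it cannot be in the LFU cache. Hence $\{T_{\mathrm{swap}} > T\} \subseteq \{\mathcal{C}_T^{\mathrm{LFU}} \ne \mathcal{C}^*\}$. Lemma~\ref{lem:lfu_swap} gives $P(T_{\mathrm{swap}} > T) \ge 1 - Tp_K/K$, valid since $T \le K/(2p_K)$. This already yields a gap of $\Delta\rho(1 - Tp_K/K)$, which dominates the claimed $\tfrac12\Delta\rho\min(\delta, 1 - Tp_K/K)$. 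The $\delta$ branch and the factor $\tfrac12$ are slack. The first term in $T_0(\delta)$ can be justified by the ranking regime: below $T_{\mathrm{rank}}(\delta)$, Lemma~\ref{lem:lfu_rank} does not certify correct ranking at level $\delta$. It is the minimum in $T_0$ together with the swap bound that gives a uniform statement. The conclusion $\tfrac14\Delta\rho$ at $\delta = 1/2$ follows because $1 - Tp_K/K \ge 1/2$ on this range.

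\textbf{Main obstacle.} I expect the delicate step to be confirming that the swap-completion bound in Lemma~\ref{lem:lfu_swap} is legitimately available. Its Markov step is reversed, since Markov gives an upper tail, not a lower one. If it cannot be used as stated, I would prove $P(T_{\mathrm{swap}} > T) \ge 1 - Tp_K/K$ directly. First, $P(T_{\mathrm{swap}} \le T) \le P(i_K \text{ requested by time } T) \le Tp_K$ by a union bound over the $T$ requests. This gives $1 - Tp_K \ge 1 - Tp_K/K$ only when $K = 1$. So I would instead argue with the $K$ least likely items of $\mathcal{C}^*$, or simply use the cruder bound $1 - Tp_K$ on the range $T \le 1/(2p_K)$. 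This would require adjusting $T_0$ accordingly.

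The final claim, that both strategies converge to the same steady-state cost, follows from Lemma~\ref{lem:lfu_rank}: the LFU ranking is eventually correct almost surely, and $T_{\mathrm{swap}} < \infty$ almost surely. Hence $h_T \to p^*$, and dominated convergence gives $\mathbb{E}[C^{\mathrm{LFU}}(T)] \to C^{\mathrm{prior}}$.
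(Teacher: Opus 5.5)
Your reduction is the paper's own proof. It uses the pointwise deficit $p^* - h_T \ge \Delta\,\mathbf{1}[\mathcal{C}_T^{\mathrm{LFU}} \ne \mathcal{C}^*]$, the event $B_T = \{T_{\mathrm{swap}} > T\}$, and the bound $P(B_T) \ge 1 - Tp_K/K$ from Lemma~\ref{lem:lfu_swap}. Your convergence argument is fine.

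The gap is exactly the step you flagged, and it cannot be repaired, because the inequality is false, not merely misproved. The Markov step is reversed. It would also need $\mathbb{E}[T_{\mathrm{swap}}] \ge K/p_K$, while the lemma only gives $\le$. Concretely, take $M=3$, $K=2$, $p=(0.9,0.06,0.04)$ and $T=2 \le K/(2p_K)$. Then $P(T_{\mathrm{swap}} \le 2) = 2(0.9)(0.06) = 0.108 > Tp_K/K = 0.06$.

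When $i_1,\dots,i_{K-1}$ carry most of the mass, collecting $\mathcal{C}^*$ is essentially as fast as collecting $i_K$ alone. So the best general bound is the one you found, $(1-p_K)^T \ge 1 - Tp_K$, and no argument over the $K$ least likely items can produce the factor $1/K$. Lemma~\ref{lem:lfu_rank} cannot help either. It upper-bounds the misranking probability, so it can never feed a lower bound on the cost gap. The $\delta$-branch and the first term of $T_0(\delta)$ are inert in your argument, as in the paper's ``combining'' step; they are not justified by it.

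More seriously, the statement itself fails on the claimed window, so no proof of it exists. Here $C(T)$ is the per-request cost at time $T$, as in Definition~\ref{def:prior_cache}, and LFU holds the $K$ highest-count items, as in Lemma~\ref{lem:lfu_rank}. Take $K=20$, $M=21$, $p_{21}=\eta=10^{-4}$ and $p_1=\cdots=p_{20}=(1-\eta)/20$. Then $\Delta\approx 0.05$ and $T_{\mathrm{rank}}(\delta) \ge 800\ln 20 > 2000$, so $T_0(\delta) = K/(2p_K) \approx 200.02$ for every $\delta \le 1$.

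At $T=200$, a top item is missing from the LFU cache only if it was never requested or it is the single item displaced by $i_{21}$. Hence
\[
  p^* - \mathbb{E}[h_{200}] \le \sum_{j\le 20} p_j(1-p_j)^{200} + p_1\cdot 200\,\eta \approx 3.5\cdot 10^{-5} + 10^{-3},
\]
while the theorem with $\delta=1/2$ demands $p^*-\mathbb{E}[h_{200}] \ge \Delta/4 \approx 1.25\cdot 10^{-2}$. The paper's proof rests on the same false step.

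What your route does prove is the fallback you sketched:
\[
  \mathbb{E}[C^{\mathrm{LFU}}(T)] - C^{\mathrm{prior}}(T) \ge \Delta\rho(1-p_K)^T \ge \Delta\rho(1-Tp_K).
\]
This is at least $\frac12\Delta\rho$ for $T \le 1/(2p_K)$, a window shorter than the claimed one by a factor of $K$. You should present that as a corrected, weaker theorem rather than as a proof of the stated one.
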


\begin{proof}
\textbf{Prior-guided cost} is constant by Definition~\ref{def:prior_cache}:
$C^{\mathrm{prior}} = (1-p^*)C_c + p^* C_l$ for all $T \geq 1$.

\textbf{LFU has not converged.}
Let $B_T = \{T_{\mathrm{swap}} > T\}$ (the event that not all $K$ target items have
been requested at least once by time $T$).  By Lemma~\ref{lem:lfu_swap},
$P(B_T) \geq 1 - Tp_K/K$ for all $T$.  When $B_T$ occurs, LFU cannot have
installed $\mathcal{C}^*$ regardless of ranking quality: at least one item of
$\mathcal{C}^*$ has never been requested and therefore cannot be in the cache.

\textbf{Cost gap on $B_T$.}
On $B_T$, there exists $i_l \in \mathcal{C}^* \setminus \mathcal{C}_T^{\mathrm{LFU}}$
(never-yet-requested item, $p_l \geq p_K$) and
$i_m \in \mathcal{C}_T^{\mathrm{LFU}} \setminus \mathcal{C}^*$
(some item occupying $i_l$'s slot, $p_m \leq p_{K+1}$).
Hence $p^{\mathrm{LFU}}(T) \leq p^* - (p_l - p_m) \leq p^* - \Delta$, and
\[
  C^{\mathrm{LFU}}(T) - C^{\mathrm{prior}}(T)
  \geq \Delta \cdot \rho \quad \text{on } B_T.
\]

\textbf{Expected gap.}
Since the cost gap is $\geq \Delta\rho$ on $B_T$ and $\geq 0$ always,
\[
  \mathbb{E}[C^{\mathrm{LFU}}(T)] - C^{\mathrm{prior}}(T)
  \geq \Delta\rho \cdot P(B_T)
  \geq \Delta\rho\!\left(1 - \frac{Tp_K}{K}\right).
\]
For $T \leq K/(2p_K)$ this is $\geq \frac{1}{2}\Delta\rho$.
Combining with the ranking-based bound from Lemma~\ref{lem:lfu_rank} (which adds
$\delta \cdot \Delta\rho / 2$ for the ranking-correct but pre-swap phase) yields
the stated formula. \qed
\end{proof}

\begin{remark}[$T_0$ and confidence level]
$T_0(\delta)$ is decreasing in $\delta$: a smaller $\delta$ (higher confidence)
corresponds to a longer warmup period.  This is correct: when the prior is highly
concentrated ($\Delta$ large), LFU converges quickly, so the advantage window is
short.  When the distribution is near-uniform ($\Delta$ small), LFU takes
exponentially long to converge and the advantage persists indefinitely.
The $\delta$-dependence in the first term of $T_0$ formalizes this via the
Hoeffding bound in Lemma~\ref{lem:lfu_rank}.
\end{remark}

\begin{corollary}[Zipf entropy dependence]
\label{cor:entropy_cache}
For a Zipf$(\alpha)$ distribution $p_j = C_\alpha j^{-\alpha}$ where
$C_\alpha = \bigl(\sum_{j=1}^M j^{-\alpha}\bigr)^{-1}$, the boundary gap satisfies
\[
  \Delta = C_\alpha (K^{-\alpha} - (K+1)^{-\alpha})
  \approx C_\alpha \alpha K^{-\alpha-1} \quad \text{for large } K,
\]
using the mean-value theorem.  Hence
\[
  T_0 = \Omega\!\left(\frac{K^{2\alpha+2}}{\alpha^2 C_\alpha^2}\right).
\]
When $\alpha \to 0$ (near-uniform), $\Delta \to 0$ and $T_0 \to \infty$:
the prior advantage persists indefinitely since LFU cannot distinguish items.
When $\alpha$ is large (highly concentrated), $\Delta$ is large and $T_0$ is small:
LFU converges rapidly.
\end{corollary}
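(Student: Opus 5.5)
The plan is to establish the three assertions of Corollary~\ref{cor:entropy_cache} in order: the exact form of $\Delta$, its mean-value approximation, and the resulting $\Omega$-bound on $T_0$ as defined in Theorem~\ref{thm:prior_cache}.

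\textbf{Step 1 (exact gap).} Substituting $p_j = C_\alpha j^{-\alpha}$ into $\Delta = p_K - p_{K+1}$ gives $\Delta = C_\alpha\bigl(K^{-\alpha}-(K+1)^{-\alpha}\bigr)$ immediately. The normalizer $C_\alpha$ only rescales all $p_j$, so the ranking $p_1 \geq \cdots \geq p_M$ required by Theorem~\ref{thm:prior_cache} holds, and $\Delta>0$ for $\alpha>0$.

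\textbf{Step 2 (mean value theorem).} Apply the mean value theorem to $g(x)=x^{-\alpha}$ on $[K,K+1]$. This gives $K^{-\alpha}-(K+1)^{-\alpha} = \alpha\,\xi^{-\alpha-1}$ for some $\xi\in(K,K+1)$. Hence
\[
  C_\alpha\alpha (K+1)^{-\alpha-1} \leq \Delta \leq C_\alpha \alpha K^{-\alpha-1}.
\]
Since $\bigl((K+1)/K\bigr)^{\alpha+1}\to 1$, this yields $\Delta \approx C_\alpha\alpha K^{-\alpha-1}$ for large $K$. The inequality I actually carry forward is the one-sided bound $\Delta \leq C_\alpha\alpha K^{-\alpha-1}$, which holds for every $K$ and not only asymptotically. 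It gives
\[
  \frac{1}{\Delta^2} \geq \frac{K^{2\alpha+2}}{\alpha^2 C_\alpha^2}.
\]

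\textbf{Step 3 (the two terms of $T_0$).} The first term of $T_0(\delta)$ is $2\ln(K(M-K)/\delta)/\Delta^2$. Because $\ln(K(M-K)/\delta)$ is bounded below by a positive constant once $\delta<1$ and $M>K\geq 1$, Step~2 makes this term $\Omega\bigl(K^{2\alpha+2}/(\alpha^2C_\alpha^2)\bigr)$. The second term is $K/(2p_K)$, which under Zipf equals $K^{1+\alpha}/(2C_\alpha)$. Their ratio is
\[
  \frac{K^{1+\alpha}/(2C_\alpha)}{K^{2\alpha+2}/(\alpha^2C_\alpha^2)} = \frac{\alpha^2 C_\alpha}{2K^{1+\alpha}}.
\]
So $T_0$, being the minimum of the two terms, inherits the stated $\Omega$-bound exactly when this ratio is bounded below by a constant. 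Otherwise the swap term is binding. In that case the bound should be attached to the ranking threshold $T_{\mathrm{rank}}$ of Lemma~\ref{lem:lfu_rank}, which is the quantity the $\Delta^{-2}$ scaling governs.

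\textbf{Step 4 (limiting remarks).} The qualitative remarks follow from Step~2. As $\alpha\to 0$, $C_\alpha\to 1/M$ and $\Delta\leq C_\alpha\alpha K^{-\alpha-1}\to 0$, so the first term of $T_0$ diverges. As $\alpha$ grows, $\Delta$ becomes large relative to the tail and both terms shrink.

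\textbf{Main obstacle.} The hard step is Step~3: passing from a lower bound on one argument of the $\min$ to a lower bound on $T_0$ itself. My first attempt would be to lower-bound the swap term $K^{1+\alpha}/(2C_\alpha)$ by a constant multiple of $K^{2\alpha+2}/(\alpha^2C_\alpha^2)$. That comparison, however, requires $K^{1+\alpha}\lesssim \alpha^2 C_\alpha$, which holds only in a narrow regime (small $K$, large $\alpha$). So I expect the $\Omega$-claim to be provable for $T_0$ only in the regime where the ranking term is the smaller one. The precise regime condition should be stated as part of the argument rather than left implicit.
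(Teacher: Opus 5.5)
Your Steps 1--2 match the paper's argument. The paper's entire justification is the substitution $p_j=C_\alpha j^{-\alpha}$, the mean-value approximation, and then ``Hence $T_0=\Omega(\cdot)$''. That last step is obtained by inserting $\Delta$ into the first argument of the $\min$ defining $T_0$ and silently discarding the second.

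Your Step~3 objection is correct, and the problem is worse than a regime caveat. Since $0<\Delta<p_K$ and $Kp_K\le\sum_{j\le K}p_j\le 1$,
\[
  \frac{2\ln(K(M-K)/\delta)}{\Delta^2} \;>\; \frac{2\ln(K(M-K)/\delta)}{p_K^2} \;\ge\; \frac{K}{2p_K}
\]
as soon as $\ln(K(M-K)/\delta)\ge 1/4$. This holds for all $\delta\le 1/2$, and for all $\delta\le 1$ unless $K=1$, $M=2$. So the swap term \emph{always} binds: $T_0=K/(2p_K)$ for any distribution, which under Zipf is $T_0=K^{1+\alpha}/(2C_\alpha)$. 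Your ratio $\alpha^2C_\alpha/(2K^{1+\alpha})\le\alpha^2/(2K^{1+\alpha})\to 0$ as $K\to\infty$ then shows the stated $\Omega$-bound is false for $T_0$.

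Your description of the exceptional regime needs correcting. When $K=1$ and $\alpha$ is large, the ratio is bounded below only because $K^{2\alpha+2}/(\alpha^2C_\alpha^2)=1/(\alpha^2C_\alpha^2)\to 0$, which makes the bound nearly vacuous. It is not because the ranking term is the smaller one; by the display above, it never is. Your proposed salvage is the right one: the $\Delta^{-2}$ scaling is a statement about $T_{\mathrm{rank}}(\delta)\ge 2\ln(K(M-K)/\delta)\,K^{2\alpha+2}/(\alpha^2C_\alpha^2)$ from Lemma~\ref{lem:lfu_rank}, not about $T_0$.

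The genuine error in your proposal is Step~4's large-$\alpha$ claim that ``both terms shrink'', which contradicts your own formulas. For $K\ge 2$, $\Delta<p_K=C_\alpha K^{-\alpha}\le K^{-\alpha}\to 0$ as $\alpha\to\infty$, so the ranking term diverges. The swap term $K^{1+\alpha}/(2C_\alpha)\ge K^{1+\alpha}/2$ diverges as well. Hence both $T_0\to\infty$ and $T_{\mathrm{rank}}\to\infty$, the opposite of the corollary's ``$\Delta$ is large and $T_0$ is small''. That remark is true only for $K=1$, where $\Delta=C_\alpha(1-2^{-\alpha})\to 1$.

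The $\alpha\to 0$ remark has the same defect: only $T_{\mathrm{rank}}$ diverges. $T_0=K/(2p_K)\to KM/2$ stays finite for fixed $M$. Moreover, the guaranteed gap $\tfrac12\Delta\rho\min(\delta,1-Tp_K/K)$ of Theorem~\ref{thm:prior_cache} tends to $0$, so ``the advantage persists indefinitely'' is not supported either.

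Rather than try to prove the corollary as stated, you should prove a corrected version:
\begin{itemize}
  \item the two-sided bound on $\Delta$;
  \item the lower bound on $T_{\mathrm{rank}}(\delta)$;
  \item the identity $T_0=K^{1+\alpha}/(2C_\alpha)$;
  \item the $\alpha\to 0$ divergence, stated for $T_{\mathrm{rank}}$ only;
  \item the large-$\alpha$ remark, either dropped or restricted to $K=1$.
\end{itemize}
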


\needspace{4\baselineskip}
\subsection{Bayesian Artifact Retention Policy}

In practice the prior $P_{\mathcal{M}}$ and the empirical frequencies should be
combined.  Let $n_a$ be the observed reuse count of artifact $a$, $N$ the total
requests, and $\beta > 0$ a smoothing parameter (distinct from the Zipf exponent
$\alpha$ of Corollary~\ref{cor:entropy_cache}).
The Bayesian posterior estimate of reuse probability is:
\[
  \hat{p}(a) = \frac{n_a + \beta P_{\mathcal{M}}(a) \cdot K}{N + \beta K}.
\]
When $n_a = 0$ (cold start), $\hat{p}(a) = \beta P_{\mathcal{M}}(a)/(\beta) = P_{\mathcal{M}}(a)$,
recovering the pure prior.  As $N \to \infty$, $\hat{p}(a) \to n_a/N$, recovering the
empirical frequency.  The interpolation is smooth and requires no manual switching.

The expected net value of retaining artifact $a$ with storage cost $C_s$ is:
\[
  V(a) = \hat{p}(a) \cdot C_c - C_s.
\]
An artifact should be retained if and only if $V(a) > 0$.  This defines a
principled eviction policy: evict the artifact with the smallest $V(a)$ when
the cache is full.  Unlike LRU (which evicts the least recently used) or LFU
(which evicts the least frequently used), this policy accounts for both prior
probability and computational cost, correctly retaining expensive-to-recompute
artifacts even if they have not been recently requested.

\needspace{4\baselineskip}
\subsection{Execution Compression vs.\ Data Compression: A Unified View}

The artifact memoization framework is not merely analogous to data compression ---
it \emph{is} data compression, applied to the execution history rather than
to a static dataset.  The execution trace $e$ is a sequence over the alphabet
of (function, input) pairs, and the PLT over this alphabet compresses $e$
exactly as described in Section~\ref{sec:plt}.

This perspective yields several non-obvious consequences:

\begin{enumerate}[leftmargin=2em]
  \item \textbf{Execution entropy as a measure of system complexity.}
        The entropy $H(P_{\mathrm{exec}})$ of the execution language measures
        how unpredictable a system's computation is.  A system with low execution
        entropy --- one that repeatedly invokes the same tools on the same inputs ---
        is highly compressible and benefits most from memoization.  A system with
        high execution entropy performs mostly novel computations and benefits least.
        The PLT makes this trade-off explicit and quantifiable.

  \item \textbf{Residuals as novel computations.}
        The residual store in the hybrid compression architecture
        (Section~\ref{sec:hybrid}) corresponds exactly to the set of execution
        steps that cannot be served from cache --- the genuinely novel computations
        that the model must perform from scratch.  The escape probability $\epsilon$
        is the fraction of requests that fall outside the cached prefix structure.

  \item \textbf{Cross-model reuse.}
        When a model is updated (e.g., from version $\mathcal{M}_1$ to
        $\mathcal{M}_2$), many artifacts remain valid if $f$ is deterministic and
        the new model's outputs agree with the old model's on the cached inputs.
        The PLT provides a natural mechanism for identifying which cached artifacts
        are likely to remain valid under the update, by comparing
        $P_{\mathcal{M}_1}(i)$ and $P_{\mathcal{M}_2}(i)$: artifacts at nodes
        where the two distributions agree closely are safe to reuse.

  \item \textbf{Prefix KV-caching as a special case.}
        Transformer prefix KV-caching~\cite{pope2023} caches the key-value
        matrices for a fixed context prefix, avoiding recomputation of the
        attention layers over that prefix.  This is precisely the PLT framework
        restricted to the token-level trie with a single model invocation:
        the shared prefix of two prompts corresponds to the shared path in the
        trie, and caching the KV state corresponds to memoizing the intermediate
        artifact at that node.  The PLT framework generalizes this to multi-step
        execution traces and arbitrary (not just token-level) alphabets.
\end{enumerate}

\needspace{4\baselineskip}
\subsection{Explainability via Trie Traversal}

A significant secondary benefit of the PLT architecture is improved explainability.
In a conventional neural network, the reasoning path is implicit in distributed
activations.  In a PLT-based system, the exact execution path is exposed as a
trie traversal:
\[
  x_0 \xrightarrow{P(t_1 \mid x_0)} x_1
     \xrightarrow{P(t_2 \mid x_1)} x_2
     \to \cdots \to x_n,
\]
together with the probability assigned to each transition.  This enables several
forms of explanation not available in opaque neural systems:

\begin{itemize}[leftmargin=2em]
  \item \textbf{Transparent decision paths.}  Each step in the execution is
        annotated with its prior probability under $\mathcal{M}$.  Low-probability
        steps are flagged as surprising and warrant additional scrutiny.

  \item \textbf{Counterfactual comparison.}  The trie structure makes it easy
        to ask ``what would happen if branch $t'$ were taken instead of $t$ at
        step $k$?''\ --- simply traverse the alternative subtree and compare the
        resulting artifacts.  In a game context this is the question ``what if
        the opponent had played differently?''; in a workflow context it is
        ``what if the user had clicked a different link?''

  \item \textbf{Prefix-level attribution.}  The width of the interval $I_x$
        measures how much probability mass is concentrated at prefix $x$.
        Prefixes with large intervals are \emph{high-information contexts}:
        small changes to the input at these nodes have a large effect on the
        output distribution.  This provides a natural saliency measure for
        explainability.

  \item \textbf{Interpretable reuse.}  When an artifact is retrieved from cache,
        the system can report not only the artifact but the trie node at which
        it was stored and the prior probability assigned to that node.  This
        gives the user a quantitative measure of how ``routine'' the retrieved
        computation was.

  \item \textbf{Anomaly detection.}  Execution steps with probability below a
        threshold $\epsilon$ (the escape probability) are by definition in the
        residual set $C_R$.  These correspond to unusual or potentially erroneous
        computations and can be flagged for human review without examining the
        full execution trace.
\end{itemize}

\needspace{5\baselineskip}
\section{Hierarchical Residual Computation and the Future of ML Inference}
\label{sec:hierarchical}

The prior sections established that a PLT decomposes any computation into a
cached majority and a residual minority.  We now argue that this decomposition
implies a \emph{spectrum} of computation strategies, each appropriate to a
different region of the trie, and that this spectrum has far-reaching consequences
for how machine learning systems should be designed and deployed.

\needspace{4\baselineskip}
\subsection{The Residual Computation Principle}

Let $i^*(i) = \arg\max_{i' \in \mathcal{C}} P_{\mathcal{M}}(i')$ subject to
$i'$ being a cached input sharing the longest common prefix with $i$ in the trie.
In continuous domains (robotics, control), define the \emph{residual deviation}
$\delta = i - i^*(i)$ in the natural state space.
In discrete sequence domains (text, game moves), $\delta$ is defined implicitly
as the edit-distance residual: the sequence of operations needed to transform
$i^*$ into $i$ after their longest common prefix.

Rather than computing $f(i)$ from scratch, compute:
\[
  f(i) \approx a^* \oplus g(\delta,\, a^*),
\]
where $a^* = f(i^*(i))$ is the cached artifact, $g$ is a \emph{correction
function} cheaper to evaluate than $f$, and $\oplus$ denotes composition
appropriate to the output space (addition for continuous outputs, suffix
continuation for sequence outputs).  The total cost is $C_l + C_g$ where
$C_g \ll C_c$.

The PLT provides a validity certificate for this approximation.  When $i$ and
$i^*$ share a long common prefix in the trie, they are close under the trie metric
and the rate--distortion framework of Section~\ref{sec:hybrid} bounds the
approximation error.  When the shared prefix is short, no such certificate exists
and full computation is required.

This gives a \emph{four-tier computation spectrum} indexed by the code length
$L(i)$ of the input under the PLT:

\begin{center}
\begin{tabular}{lll}
\toprule
Code length $L(i)$ & Strategy & Cost \\
\midrule
$L(i) \leq \tau_1$ & Exact cache hit & $O(\log N)$ \\
$\tau_1 < L(i) \leq \tau_2$ & Cached artifact $+$ cheap correction & $O(\log N) + C_g$ \\
$\tau_2 < L(i) \leq \tau_3$ & Quantized / distilled model & $C_{\tilde{f}}$ \\
$L(i) > \tau_3$ & Full model (genuine residual) & $C_c$ \\
\bottomrule
\end{tabular}
\end{center}

\noindent The thresholds $\tau_1 < \tau_2 < \tau_3$ are calibrated to the
acceptable approximation error at each tier.  Critically, \emph{the PLT provides
the routing signal}: no separate classifier is needed to decide which tier to
use --- the code length $L(i)$ computed by the interval encoder determines it
directly and with a formal guarantee on approximation quality.

\needspace{4\baselineskip}
\subsection{LLM Inference: Materializing the Implicit Distribution}

A trained large language model implicitly encodes a PLT in its weights: at every
forward pass, the softmax output at position $k$ is precisely
$P_{\mathcal{M}}(t_k \mid t_1,\ldots,t_{k-1})$, the edge weight at depth $k$
of the trie.  This distribution already exists --- but it is implicit, re-derived
from scratch on every inference call.  The central proposal is to
\emph{materialize} the high-probability region of this implicit trie as an
explicit artifact store, without any additional training.

\textbf{Speculative pre-computation.}
Before any user requests arrive, the model can be run in a sampling mode biased
toward high-probability sequences (e.g., beam search or top-$k$ sampling with
low temperature).  Each sampled sequence and its output is stored as an artifact
under its content address $h = H(\mathcal{M}, i)$.  The cost of pre-computing
all artifacts in $C_T$ is:
\[
  C_{\mathrm{precompute}} = |C_T| \cdot C_c,
\]
paid once upfront.  Every future hit on a pre-computed artifact recovers $C_c$
at cost $C_l$, so the break-even point --- the number of requests at which the
savings from cache hits equal the precomputation cost --- is:
\[
  T_{\mathrm{break}} = \frac{|C_T| \cdot C_c}{p^* \cdot \rho},
\]
where $p^* = \sum_{i \in C_T} P_{\mathcal{M}}(i)$ is the total hit probability
and $\rho = C_c - C_l$.  For a Zipf$(1)$ distribution with $|C_T| = 1000$
and $p^* \approx 0.52$, this is roughly $1000/0.52 \approx 1900$ requests,
after which every subsequent request yields net savings.

\textbf{KV-cache as trie materialization.}
Transformer prefix KV-caching~\cite{pope2023} caches the key-value matrices for
a fixed context prefix.  In PLT terms, this materializes the \emph{internal
computation state} at a trie node rather than the final output.  The KV state
at a node $x$ is an intermediate artifact: it encodes everything the model has
``processed'' about the prefix $x$, enabling the suffix to be computed without
re-attending to $x$.  The PLT framework predicts exactly which prefixes are worth
caching: those with the highest $P_{\mathcal{M}}(x) \cdot C_{\mathrm{suffix}}$,
where $C_{\mathrm{suffix}}$ is the average cost of computing the suffix.

\textbf{Probability-guided distillation.}
Standard knowledge distillation trains a small student model $\tilde{f}$ to
match a large teacher $f$ on a training set.  The PLT suggests a targeted
variant: distill only on $C_T$ (the trie-covered region), with the full model
$f$ handling $C_R$ (residuals).  This has three advantages over uniform
distillation:
\begin{enumerate}[leftmargin=2em]
  \item The student only needs to be accurate where the distribution is
        concentrated, which is precisely where it is easiest to be accurate ---
        high-probability sequences have low conditional entropy and are therefore
        more predictable by a small model.
  \item The distillation training set is \emph{defined by the PLT}, not chosen
        arbitrarily: sample $C_T$ by running the teacher at low temperature
        and keeping outputs with $L_{\mathcal{M}}(i) \leq \tau$.
  \item The student model serves as the correction function $g$ for
        Tier~2 of the computation spectrum: given a cached artifact $a^*$ and
        a residual $\delta$, the student computes the adjustment
        $g(\delta, a^*) = \tilde{f}(i) - a^*$ rather than the full output,
        a much smaller correction that a small model can approximate accurately.
\end{enumerate}

\textbf{Adaptive quantization via code length.}
Quantization reduces model precision to accelerate inference at the cost of
output quality.  The PLT predicts exactly where quality loss is acceptable:
for inputs with $L(i) \leq \tau$ (trie-covered, high-probability), the output
is constrained to a narrow distribution and quantization errors are small relative
to the output variance.  For residual inputs ($L(i) > \tau$), the output is
sensitive to fine-grained weight values and quantization may introduce significant
errors.  This motivates \emph{adaptive quantization}: run the quantized model
$\tilde{f}$ when $L(i) \leq \tau$, fall back to full precision when $L(i) > \tau$,
with the PLT providing the switching signal in $O(\log N)$ time.

\textbf{Cross-version cache transfer.}
When a model is updated from $\mathcal{M}_1$ to $\mathcal{M}_2$ (e.g., a new
fine-tune or safety patch), the entire artifact cache need not be invalidated.
An artifact $a = f_1(i)$ remains valid under $\mathcal{M}_2$ if
$D_{\mathrm{KL}}(P_{\mathcal{M}_1}(\cdot \mid i) \,\|\, P_{\mathcal{M}_2}(\cdot \mid i)) \leq \eta$
for a small threshold $\eta$.  This KL divergence can be estimated cheaply by
comparing the two models' output distributions on the cached input without running
either model end-to-end.  The PLT therefore enables \emph{selective cache
invalidation}: only artifacts at trie nodes where the two models disagree
significantly need to be recomputed, preserving the majority of the cache across
model updates.

\needspace{4\baselineskip}
\subsection{Robotics: Cached Motor Programs and Online Corrections}

The residual computation principle has a striking biological parallel in motor
control.  Animals executing familiar tasks --- walking, reaching, cycling, playing
a practiced musical passage --- do not re-plan from scratch on each movement.
Instead, they execute \emph{stored motor programs}: high-probability action
sequences learned through repetition, requiring minimal online computation.
Deviations from the expected sensory state (a pebble underfoot, a sudden gust
of wind) trigger \emph{online corrections} computed by a fast reactive controller,
without interrupting the macro-program.

The PLT framework provides a formal model of this architecture.  Let
$\tau^* = (a_1^*, a_2^*, \ldots, a_n^*)$ be a cached macro-trajectory for a
familiar task (e.g., walking straight on flat ground at 1.5 m/s).  At time step
$k$, the robot observes state $s_k$ and computes the deviation from the predicted
state $\hat{s}_k$ (the state the macro-program expected):
\[
  \delta_k = s_k - \hat{s}_k.
\]
The corrected action is:
\[
  a_k = a_k^* + g(\delta_k, a_k^*, \hat{s}_k),
\]
where $g$ is a lightweight reactive controller (e.g., a linear feedback law or
a small neural network).  The cost structure is:
\[
  C_{\mathrm{step}} = C_l/n + C_g,
\]
where $C_l/n$ is the amortized cost of retrieving the macro-trajectory and
$C_g$ is the cost of the reactive correction, both far cheaper than deliberative
replanning at cost $C_c$.

The PLT governs when this architecture is valid.  If the current context (terrain,
task, speed) matches a high-probability trie node, the macro-trajectory is a good
prior and $\|\delta_k\|$ will be small with high probability, keeping corrections
cheap and accurate.  If the context is a residual (novel terrain, unexpected
obstacle), the macro-program is a poor prior and the system must fall back to
deliberative planning --- exactly the biological pattern of automaticity in
familiar environments giving way to conscious, effortful attention in novel ones.

\begin{proposition}[Biological motor control as PLT inference]
\label{prop:motor}
Let the motor execution trie be induced by the policy $\pi_{\mathrm{motor}}$
learned from repeated task execution.  The two-level architecture
(macro-trajectory $+$ reactive correction) achieves expected per-step cost
\[
  \mathbb{E}[C_{\mathrm{step}}] = p_{\mathrm{familiar}} \cdot (C_l/n + C_g)
  + (1 - p_{\mathrm{familiar}}) \cdot C_c,
\]
where $p_{\mathrm{familiar}} = P_\pi(\text{context} \in C_T)$ is the probability
that the current context lies within the cached macro-trajectory manifold.
This is minimized by the prior-guided cache of Definition~\ref{def:prior_cache}
applied to the motor execution language.
\end{proposition}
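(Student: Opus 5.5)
The plan has two parts: first an expected-cost identity by conditioning on whether the context is familiar, then an exchange argument for minimality.

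\textbf{Step 1: the cost identity.} Let $F$ be the event $\{\text{context} \in C_T\}$, so $P_\pi(F) = p_{\mathrm{familiar}}$. By the law of total expectation,
\[
  \mathbb{E}[C_{\mathrm{step}}] = P_\pi(F)\,\mathbb{E}[C_{\mathrm{step}} \mid F] + P_\pi(F^c)\,\mathbb{E}[C_{\mathrm{step}} \mid F^c].
\]
On $F$ the architecture retrieves the cached macro-trajectory $\tau^*$ once per $n$ steps, giving amortized cost $C_l/n$ per step, and applies the reactive correction $g$ at cost $C_g$. So $\mathbb{E}[C_{\mathrm{step}} \mid F] = C_l/n + C_g$. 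I treat $C_g$ as a per-step constant, or else replace it by its conditional mean given $F$; this is the same modelling convention the paper uses for $C_l$ and $C_c$. On $F^c$ the context is a residual in the sense of Definition~\ref{def:hybrid}, so the system falls back to deliberative replanning at cost $C_c$. Substituting these two values gives the displayed formula. This mirrors the computation $\mathbb{E}[C] = p_r C_l + (1-p_r)C_c$ given earlier in Section~\ref{sec:execution}.

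\textbf{Step 2: minimality.} Rewrite the formula as
\[
  \mathbb{E}[C_{\mathrm{step}}] = C_c - p_{\mathrm{familiar}}\,\rho', \qquad \rho' = C_c - C_l/n - C_g.
\]
Here $\rho' > 0$ is exactly the regime in which the two-level architecture is worthwhile. The expected cost is therefore strictly decreasing in $p_{\mathrm{familiar}}$. Now fix a cache of $K$ macro-trajectories. Then $p_{\mathrm{familiar}} = \sum_{i \in \mathcal{C}} P_\pi(i)$, summed over the cached contexts. An exchange argument shows this sum is maximized by the top-$K$ contexts: replacing any cached $i_m$ with an uncached $i_l$ satisfying $P_\pi(i_l) \geq P_\pi(i_m)$ never decreases the sum. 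The top-$K$ set is precisely $\mathcal{C}^*$, the prior-guided cache of Definition~\ref{def:prior_cache} applied to the motor language. So that cache minimizes the expected per-step cost among all size-$K$ caches.

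\textbf{Step 3: sample-based caches.} To compare with caches built from observed executions, I would invoke Theorem~\ref{thm:prior_cache}. During warmup, any empirically populated motor cache has $p_{\mathrm{familiar}} \leq p^* - \Delta$ with positive probability, so its expected cost is strictly higher. This step is needed only if the claim is read as optimality over time rather than at steady state.

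\textbf{Main obstacle.} The hard part is making ``minimized'' precise. The claim holds only for a fixed cache size $K$ and only under the assumption that $C_g$ and the $C_l/n$ amortization do not depend on which contexts are cached. If correction costs vary with the context, the right objective becomes $\sum_{i \in \mathcal{C}} P_\pi(i)\,(C_c - C_l/n - C_g(i))$, which is a weighted knapsack-style selection. In that case I would either state $C_g$ as a uniform bound, so the exchange argument still goes through, or weaken the conclusion to ``the prior-guided cache maximizes $p_{\mathrm{familiar}}$.''
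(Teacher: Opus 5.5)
Your Steps 1--2 are essentially the paper's proof: it splits each step into the familiar regime (cost $C_l/n + C_g$) and the residual regime (cost $C_c$), takes the mixture by linearity of expectation, and then observes that minimizing over cached sets with $|C_T| \leq K$ is equivalent to maximizing $p_{\mathrm{familiar}}$, which caching the $K$ most probable contexts under $\pi_{\mathrm{motor}}$ achieves. Your explicit hypotheses $\rho' = C_c - C_l/n - C_g > 0$ and context-independent $C_g$ are exactly what the paper's ``equivalent to maximizing the hit rate'' step uses silently, and Step 3 goes beyond the paper's proof. If you keep Step 3, rely only on the qualitative fact that some top-$K$ context is still unrequested after $T$ requests with probability at least $(1-p_K)^T > 0$. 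Do not use the quantitative estimates of Lemma~\ref{lem:lfu_swap}: its expectation identity and its ``Markov'' lower-tail step do not hold as written.
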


\begin{proof}
At each time step the system is in one of two regimes.  With probability
$p_{\mathrm{familiar}}$ the context matches a cached macro-trajectory node;
the system retrieves the cached action at amortized cost $C_l/n$ and applies
a reactive correction at cost $C_g$, giving total step cost $C_l/n + C_g$.
With probability $1-p_{\mathrm{familiar}}$ the context is a residual; the system
must perform deliberative replanning at cost $C_c$.  The expected per-step cost
is the stated mixture by linearity of expectation.  Minimizing over the choice of
cached set $C_T$ subject to $|C_T| \leq K$ is equivalent to maximizing the
hit rate $p_{\mathrm{familiar}}$, which is achieved by caching the $K$ most probable
contexts under $\pi_{\mathrm{motor}}$---exactly the prior-guided cache.
\end{proof}

The cerebellum is widely hypothesized to implement a forward model that predicts
sensory consequences of motor commands~\cite{wolpert1998}, effectively running
the macro-program forward and computing prediction errors.  In PLT terms, the
cerebellum maintains the trie of expected sensory-motor sequences, and the
prediction error $\delta_k$ is the residual.  The basal ganglia select among
cached macro-programs (trie nodes) based on context; the motor cortex executes
corrections.  This decomposition matches both the computational architecture we
propose and the known functional anatomy of the motor system.

\needspace{4\baselineskip}
\subsection{A Unified Spectrum Across Domains}

The four-tier computation spectrum (exact cache, cheap correction, quantized
model, full model) appears in each domain studied in this paper:

\begin{itemize}[leftmargin=2em]
  \item \textbf{Chess}: exact tablebase lookup (Tier~1) $\to$ cached opening
        line $+$ shallow search adjustment (Tier~2) $\to$ standard MCTS with
        reduced node budget (Tier~3) $\to$ full MCTS from scratch (Tier~4).
  \item \textbf{Search}: exact session replay (Tier~1) $\to$ cached workflow
        $+$ personalization correction (Tier~2) $\to$ lightweight ranking model
        (Tier~3) $\to$ full neural retrieval (Tier~4).
  \item \textbf{Robotics}: exact motor program (Tier~1) $\to$ motor program
        $+$ reactive correction (Tier~2) $\to$ fast replanning with simplified
        dynamics (Tier~3) $\to$ full deliberative planning (Tier~4).
  \item \textbf{LLM inference}: exact output cache (Tier~1) $\to$ KV-cached
        prefix $+$ small model suffix (Tier~2) $\to$ quantized model (Tier~3)
        $\to$ full-precision inference (Tier~4).
\end{itemize}

In each case, the routing between tiers is governed by the same signal: the
code length $L(i)$ under the domain-specific PLT.  This unification suggests
that \emph{system design across these domains should be organized around a shared
infrastructure}: a PLT that is updated online as new data arrives, a
multi-tier execution engine indexed by code length, and a principled eviction
policy based on $V(a) = \hat{p}(a) \cdot C_c - C_s$.

\needspace{4\baselineskip}
\subsection{Implications for the Future of Machine Learning Inference}

The PLT framework implies a significant shift in how ML inference systems
should be architected.

\textbf{Training produces a probability distribution; deployment should exploit it.}
Current deployment practice treats the trained model as a black box invoked
on every query.  The PLT framework argues that a trained model should first be
\emph{mined} for its high-probability outputs --- by sampling at low temperature,
by exhaustive search over likely prefixes, by collecting inference-time outputs
and caching them --- before any production traffic is served.  The model's own
probability estimates determine which outputs are worth pre-computing.  This is
zero additional training cost: it is simply a smarter use of the distribution
the model already encodes.

\textbf{Inference cost should fall over time, not remain constant.}
In the current paradigm, serving cost per query is roughly constant regardless
of how long a system has been deployed.  Under the PLT framework, serving cost
falls over time as the artifact store accumulates: each new cached artifact
increases $p^*$, reducing the fraction of queries routed to Tier~4 (full
inference).  The system becomes progressively cheaper to operate as it learns
which queries are common.

\textbf{Model updates should be incremental, not wholesale.}
Current practice invalidates all cached outputs on every model update.  The PLT
enables selective invalidation via KL divergence comparison between model
versions, preserving the majority of the cache across updates.  Combined with
probability-guided distillation (the student is accurate on $C_T$ by construction),
model updates can be deployed incrementally --- updating the residual handling
while preserving the cached majority.

\textbf{Smaller models are sufficient for most queries.}
Under a Zipf distribution with $\alpha = 1$, the fraction of probability mass
covered by the top-$K$ inputs out of $M$ total is
$\sum_{j=1}^K (1/j) / \sum_{j=1}^M (1/j) \approx \ln K / \ln M$.
For $K = 1000$ and $M = 10^6$, this is $\ln(1000)/\ln(10^6) \approx 6.9/13.8 \approx 50\%$:
approximately half of all production traffic can be served by cache lookup alone.
Of the remaining $50\%$, a substantial fraction falls into Tier~2 or Tier~3,
where a quantized or distilled model suffices.  Only a small fraction --- the
genuine residuals --- requires the full model.  The PLT identifies exactly which
queries those are, without any separate routing classifier.

\textbf{The probability distribution is a capital asset.}
A trained model's probability distribution over outputs represents accumulated
knowledge about what is likely to be requested.  Under the PLT framework, this
distribution is not merely a computational tool but a \emph{capital asset} that
can be incrementally materialized as cached artifacts, licensed to other systems
(cross-model transfer), and amortized over future queries.  The economic value
of a pre-computed artifact is $V(a) = \hat{p}(a) \cdot C_c - C_s$, and the
total value of the artifact store is $\sum_a V(a)$.  Maximizing this value is
a well-posed optimization problem with a closed-form greedy solution: cache
artifacts in decreasing order of $\hat{p}(a) \cdot C_c$ until storage budget
$C_s \cdot K$ is exhausted.  This transforms model deployment from a cost center
into a system that generates increasing returns as the artifact store grows.

\needspace{5\baselineskip}
\section{Discussion}
\label{sec:discussion}

\needspace{4\baselineskip}
\subsection{Relation to Existing Work}

\textbf{Arithmetic coding and neural compression.}
Witten et al.~\cite{witten1987} established arithmetic coding as a practical
near-optimal lossless compressor.  Balle et al.~\cite{balle2018} and
Townsend et al.~\cite{townsend2019} extended this to learned latent-variable
models.  Delétang et al.~\cite{deletang2023} demonstrated that LLMs implicitly
perform arithmetic coding and achieve state-of-the-art compression ratios.
The present work differs by (i) making the trie structure explicit rather than
treating the model as a black-box distribution estimator, (ii) using the same
structure simultaneously for decision-making and caching, and (iii) proving a
formal advantage of prior-guided over empirical caching.

\textbf{KV-cache and prefix caching.}
Pope et al.~\cite{pope2023} and subsequent work on prefix KV-caching reduce
redundant attention computation by caching intermediate states for repeated
prefixes.  Semantic caches such as GPTCache~\cite{bang2023} extend this to
approximate input matching.  These systems are complementary to PLTs: KV-caching
operates within a single model inference; PLT-guided artifact caching operates
across invocations and across models.

\textbf{MCTS and game tree search.}
Silver et al.~\cite{silver2017} demonstrated that MCTS visit counts define a
strong policy distribution.  Prior work has used opening books and endgame
tablebases as explicit knowledge stores.  The PLT framework provides a unified
theoretical foundation: opening books are high-probability prefixes in the
game trie; tablebases are exact residuals for the endgame language.

\textbf{MDL and model selection.}
Rissanen~\cite{rissanen1978} proposed the Minimum Description Length principle,
which selects the model minimizing description length of model plus data.
Our hybrid architecture operationalizes MDL for neural generative models,
providing a concrete split between model (the PLT) and data (the residuals).

\needspace{4\baselineskip}
\subsection{Limitations and Future Work}

The main limitation of the current work is that Theorem~\ref{thm:prior_cache}
assumes samples are drawn i.i.d.\ from $P_{\mathcal{M}}$.  Real workloads exhibit
non-stationarity (distribution shift over time) and correlation (similar queries
arrive in bursts).  Extending the theorem to these settings requires a more
sophisticated analysis, potentially drawing on online learning with sleeping
experts~\cite{freund1997}.

A second limitation is that constructing an explicit PLT for a transformer
with vocabulary size $|V| \approx 50{,}000$ and context length $n \approx 10^4$
is infeasible without aggressive pruning.  Practical implementations must maintain
a sparse trie over frequently visited prefixes, discarding nodes below a
probability threshold.  The trade-off between trie completeness and memory is
a key engineering challenge.

Future directions include:
\begin{itemize}[leftmargin=2em]
  \item \textbf{Dynamic trie construction}: algorithms for maintaining a sparse PLT
        online as new data arrives, with formal guarantees on approximation quality.
  \item \textbf{Cross-model artifact reuse}: using a PLT constructed from one model
        to guide caching for a different, updated model.
  \item \textbf{Hierarchical tries}: PLTs over multi-level abstractions
        (tokens, sentences, documents) that enable coarse-to-fine retrieval.
  \item \textbf{Economic mechanisms}: formal mechanism design for artifact
        economies where participants are rewarded for contributing reusable artifacts
        proportional to their expected future value.
\end{itemize}

\needspace{4\baselineskip}
\subsection{Conclusion}

We introduced probabilistic language tries as a unified representation that
simultaneously supports lossless compression via interval encoding, sequential
decision making via policy-weighted transitions, and computational reuse via
prior-guided artifact caching.

The framework's central contribution is to show that these three functions are not
independent: they are all derived from a single probability measure on sequence
space.  Any improvement to the generative model $\mathcal{M}$ simultaneously
improves compression ratios, decision quality, and cache efficiency.

We proved that prior-guided caching strictly outperforms empirical frequency
caching for query counts below a threshold determined by the prior's concentration,
formalizing the intuition that a strong generative model can bootstrap an efficient
system before sufficient observations have been collected.

The framework applies uniformly to LLM inference, MCTS-based game play, web search
session modeling, robotic control, and organizational workflow optimization.  In
each domain, the PLT extracts structured, reusable knowledge from experience and
organizes it in a form that is simultaneously compressible, explainable, and
actionable.

\bibliographystyle{plain}
\bibliography{references}

\end{document}